\documentclass[journal]{IEEEtran}
\usepackage[english]{babel}
\usepackage{color}
\usepackage{graphicx}
\usepackage[export]{adjustbox}
\usepackage[caption=false]{subfig}	
\usepackage{ifpdf}
\ifpdf
  \usepackage{epstopdf}
\fi

\usepackage{amsmath,mathtools,amsthm,bbm}
\usepackage{amssymb,amsfonts,mathrsfs}

\usepackage{array,calc}
\usepackage{enumerate,enumitem}
\usepackage{cite,cleveref}
\crefname{figure}{fig.}{figs.}

\newcommand{\wid}{\mathscr{W}}
\DeclareMathOperator{\sign}{sign}

\crefname{app}{Appendix}{Appendices}
\crefname{cor}{Corollary}{Corollaries}
\crefname{prop}{Proposition}{Propositions}
\crefname{lemma}{Lemma}{Lemmas}
\crefname{defn}{Definition}{Definitions}
\crefname{conj}{Conjecture}{Conjectures}
\crefname{exam}{Example}{Examples}
\crefname{supp}{Supplemental Section}{Supplemental Sections}

\newtheorem{theorem}{Theorem}
\newtheorem{cor}{Corollary}
\newtheorem{lemma}{Lemma}
\newtheorem{prop}{Proposition}

\newtheorem{defn}{Definition}

\newcommand{\bs}{\boldsymbol}
\newcommand{\bb}{\mathbb}
\newcommand{\mcal}{\mathcal}

\newcommand{\eye}{\bs{I}}
\newcommand{\zero}{\bs{0}}

\newcommand{\lb}{\left(}
\newcommand{\rb}{\right)}
\newcommand{\ls}{\left[}
\newcommand{\rs}{\right]}
\newcommand{\lc}{\left\{}
\newcommand{\rc}{\right\}}

\newcommand{\lv}{\left\vert}
\newcommand{\rv}{\right\vert}
\newcommand{\lV}{\left\Vert}
\newcommand{\rV}{\right\Vert}

\newcommand{\LRV}[1]{{\left\vert\kern-0.25ex\left\vert\kern-0.25ex\left\vert #1 \right\vert\kern-0.25ex\right\vert\kern-0.25ex\right\vert}}

\newcommand{\expect}[2]{\bb{E}_{#1}\lc#2\rc}

\newcommand{\tran}{^{\mathsf{T}}}

\newcommand{\Gauss}{\mathcal{N}}

\newcommand{\matA}{\bs{A}}

\newcommand{\bbP}{\bb{P}}

\newcommand{\bbR}{\bb{R}}

\newcommand{\calB}{\mcal{B}}

\newcommand{\calN}{\mcal{N}}

\newcommand{\calS}{\mcal{S}}
\newcommand{\calT}{\mcal{T}}

\newcommand{\vecg}{\bs{g}}

\newcommand{\vect}{\bs{t}}
\newcommand{\vecu}{\bs{u}}
\newcommand{\vecv}{\bs{v}}

\newcommand{\vecx}{\bs{x}}
\newcommand{\vecy}{\bs{y}}
\newcommand{\vecz}{\bs{z}}

\allowdisplaybreaks

\makeatletter
\makeatother
\begin{document}

\title{One-bit Compressed Sensing using Generative Models}
\author{ 
	\author{Swatantra Kafle, Geethu Joseph, \textit{Senior Member, IEEE},  and Pramod K. Varshney, \textit{Life Fellow, IEEE}
    \thanks{The material in this paper was presented
in part at the IEEE International Conference on Acoustics, Speech, \& Signal Processing (ICASSP), Barcelona, Spain in May 2020~\cite{joseph2020one}.

S. Kafle is currently with ANDRO Computational Solutions, NY, USA. He was with the Electrical Engineering and Computer Science Department at Syracuse University,  NY, USA, during the course of this work. Email:~\texttt{swakafle@gmail.com}

G. Joseph is with the Signal Processing Systems group, Electrical Engineering, Mathematics, and Computer Science faculty, at the Delft University of Technology, The Netherlands. Email:~\texttt{g.joseph@tudelft.nl}.

P. K. Varshney is with the Electrical Engineering and Computer Science Department at Syracuse University,  NY, USA. Email:~\texttt{varshney@syr.edu}.
}

}
}

\maketitle
\begin{abstract}
This paper addresses the classical problem
of one-bit compressed sensing using a deep learning-based
reconstruction algorithm that leverages a trained generative
model to enhance the signal reconstruction performance. The
generator,  a pre-trained neural network, learns to map from
a low-dimensional latent space to a higher-dimensional set of sparse
vectors. This generator is then used to reconstruct sparse
vectors from their one-bit measurements by searching over its
range. The presented algorithm provides an excellent reconstruction performance because the generative model can learn additional structural information about the signal beyond sparsity. Furthermore, we provide theoretical guarantees on the reconstruction accuracy and sample complexity of the algorithm. Through numerical experiments using three publicly
available image datasets,  MNIST, Fashion-MNIST, and Omniglot, we demonstrate the superior performance of the algorithm compared to other existing algorithms and show that our algorithm can recover both the amplitude and the direction of the signal from one-bit measurements. 
\end{abstract}

\begin{keywords}
Sparsity, one-bit compressed sensing, Lipschitz continuous generative models, variational autoencoders, image compression
\end{keywords}
\section{Introduction}
Over the past two decades, research in compressed sensing (CS)~\cite{rani2018systematic,qaisar2013compressive} has expanded rapidly, leading to advancements in signal reconstruction algorithms~\cite{OMP, BCS, CSBP, Cevher1, IHT} and inference tasks such as detection, estimation, and classification~\cite{Nowak, kafleDet, wimalajeewapartial, app}. The success of CS, coupled with the fundamental role of quantization in signal digitization, has fueled a growing interest in quantized CS\cite{li2018survey, quantized_dai, quantized_jacques}. Coarse quantization is particularly appealing as it results in significant reduction in bandwidth requirements and power consumption. One of the more popular quantization schemes is one-bit quantization, wherein the measurements are binarized by comparing signals/measurements to a fixed reference level. Using the zero reference level is the most used one-bit quantization scheme, which is also the focus of our paper. Here, the measurements are quantized based on their signs. The popularity of one-bit quantization stems from its simplicity, cost-effectiveness, and robustness to certain linear and nonlinear distortions, such as saturation~\cite{boufounos20081, laska2011trust}. However, one-bit measurements lose information about signal amplitude, and it is impossible to recover the amplitude during signal reconstruction. Therefore, one-bit CS finds applications in systems that require the recovery of the unknown signal up to a scaling factor. For example, in a frequency division duplex massive MIMO system, the direction of the channel state information at the transmitter is sufficient for the design of beam-forming vectors. In this case, using one-bit CS saves the uplink bandwidth resources required for the channel state information feedback~\cite{tang2017low}. Some other applications where one-bit CS is used are radar~\cite{ameri2019one}, source localization~\cite{shen2013one}, spectrum sensing~\cite{jian2011investigation}, and wireless sensor networks~\cite{cao2016implementation}. Motivated by these applications, in this paper, we focus on the one-bit CS problem, where the objective is to find an unknown sparse vector from its one-bit quantized noisy linear measurements.

\subsection{Related Literature}
One-bit CS was originally introduced in \cite{boufounos20081}, and several reconstruction algorithms have since then been proposed in the literature  \cite{kamilov, musa, kafle2016, binarystableEmbedd,plan2012robust,baraniukexponential, normEstimation,classification_binaryGAMP,kafle2019noisy, xu2014bayesian,kafle2021one}. One-bit CS is also known to outperform multi-bit quantized CS in some scenarios \cite{regimeCh_Laska}. Though one-bit CS has shown promising inference and signal reconstruction performance, it is also known to be quite sensitive to noise\cite{r1bcs, kafleLaplacian, yan2012robust}. Some recent works have dealt with the problem by mitigating noise \cite{kamilov, musa, plan2012robust}, using multiple measurement vectors \cite{kafle}, or by using side-information \cite{kafleLaplacian}. 
These algorithms fall into the category of ``traditional'' algorithms as they are model-driven, where the recovery performance depends on how well the model represents the actual sparse structure of the signal. Among model-driven algorithms, Bayesian algorithms often perform better than non-Bayesian counterparts due to their ability to incorporate the prior on the sparse signal structure through a probability distribution. Recently, another class of algorithms that uses the deep learning-based approach has gained traction in the literature \cite{bora2017compressed} for various estimation problems. Inspired by these advances, in this paper, we explore the possibility of using a deep learning-based approach for one-bit CS, i.e., we investigate the signal reconstruction performance of generative model-based one-bit CS. 

A well-studied deep learning technique for signal reconstruction is based on neural networks called generative models, such as generative adversarial networks ~\cite{goodfellow2014generative}  and variational autoencoders (VAEs) ~\cite{kingma2013auto}. These neural networks are trained such that they map a vector residing in a low-dimensional space to a signal in a high-dimensional space with a specific structure, such as sparsity. 
The idea of generative model-based signal reconstruction is conceptually similar to the Bayesian framework because a well-trained generative model acts as a prior to modeling the sparse signal. In this work, we propose a reconstruction algorithm that estimates the unknown sparse signal in the range space of the network by evaluating the low-dimensional vector at the input of the generative model.

A few recent works in one-bit CS employ deep generative models. One notable work considered one-bit CS with rectified linear units-based generative models and provided the analysis of the reconstruction algorithm~\cite{qiu2020robust}. This work also uses the dithering technique, which may not always be practical. On the contrary, our work addresses the general case of $L$-Lipschitz generative models without a dither signal, with results independent of the optimization algorithm. Further, another study investigates the use of a diffusion-based generative model to demonstrate its effectiveness in one-bit CS~\cite{meng2022quantized}. While this score-based generative model yields promising results, its practical applicability is constrained by the substantial size of the training dataset required and the high computational cost associated with training. In addition, signal reconstruction during inference is also computationally intensive, leading to significant latency, particularly on edge devices. This latency is especially problematic for time-sensitive applications. Additionally, the study lacks theoretical analysis of the signal reconstruction performance and the number of measurements required for accurate signal recovery. 

Furthermore, a generative model-based one-bit reconstruction algorithm and its theoretical guarantees are studied from an information-theoretic perspective in \cite{liu2020sample}. Our work complements the study in \cite{liu2020sample}, with several key differences in the setup. Firstly, our framework is rooted in an optimization problem that balances the norm and model mismatch, whereas their approach directly builds on binary iterative hard thresholding (BIHT). Unlike the algorithm proposed in \cite{liu2020sample}, our method avoids projecting the solution onto the range space of the generative models, a non-convex set, thereby simplifying the computations and achieving better performance. Moreover, while both methods guarantee the same order of measurement complexity, our approach offers deeper insights by deriving results from the underlying optimization cost. Specifically, our analysis rigorously accounts for errors arising from solving the non-convex optimization problem and the generative model's limitations in capturing all sparse vectors within its range. In contrast, their results hinge on the assumption that the obtained solution matches the measurements within a specified error margin, an outcome not directly guaranteed by their BIHT-based approach. Moreover, they restrict their analysis to signals within the generator's range that exceed a given norm, which imposes additional constraints on their method's rigor and applicability. The results in ~\cite{liu2020sample} for Gaussian matrices with independent and identically distributed entries were later extended to Gaussian circulant matrices in ~\cite{liu2021robust}.

\subsection{Our Contributions}
This paper addresses the challenge of one-bit CS by integrating model-based methods with deep generative models. The main contributions of this work are as follows:
\begin{itemize}
\item\emph{Algorithm development:} We develop a generative model-based algorithm whose generative part learns the distribution of the sample space of sparse vectors. We then use gradient descent to optimize the representation learned by the model that matches the given measurements.
\item\emph{Theoretical results:} We derive a lower bound on the number of measurements required to ensure that the reconstruction error is bounded. To be specific, we establish that when gradient descent finds a good approximate solution to the optimization problem, the algorithm output is close to the projection of the true sparse vector to the range of the generator (see \Cref{thm:NN_error}). The results are also extended to the noisy measurement cases (see \Cref{cor:noisy,cor:noisy_moremes}).
\item\emph{Empirical validation:} We demonstrate the superior signal reconstruction performance and robustness of our algorithm using mean squared error (MSE) and normalized mean square error (NMSE). We apply it for image compression using publicly available datasets such as MNIST, Fashion-MNIST, and Omniglot. 

The numerical results demonstrate that our algorithm uses fewer measurements than traditional one-bit CS algorithms to offer an improved signal recovery performance. Also, our algorithm is more robust to the noise in one-bit measurements and errors in the measurement matrix compared to traditional algorithms. 
\end{itemize}
Overall, we tackle the problem of one-bit CS by combining the strengths of model-based approaches with the capabilities of deep generative models. This fusion offers excellent recovery performance, enjoys strong theoretical guarantees, and is useful in practical applications.  

A part of this work was published in \cite{joseph2020one}. Beyond \cite{joseph2020one}, in this work, we provide comprehensive proof of the theoretical analysis, including all necessary mathematical tools, as well as extend the results for the noisy cases and discussion comparing our results with existing work. Additionally, we expand our experimental setup to evaluate the robustness of our algorithm in the presence of both additive noise and sign-flip noise. We demonstrate the superior performance of our method under measurement matrix uncertainties and also explore the limitations of the generative model-based approach. Furthermore, we present numerical results using two additional datasets, Fashion MNIST and Omniglot, to assess the effectiveness of our method.

\subsubsection*{Notation} Scalars are represented by lowercase letters and symbols, e.g., $y$ and $\gamma$. Vectors and matrices are denoted by lowercase boldface and uppercase boldface characters, such as $\vecx$ and $\matA$, respectively. Calligraphic letters, such as $\calS$, are used to represent sets. We use $\matA_{i}$ and $\matA_{i,j}$ to denote the $i$th row and $(i,j)$th element of matrix $\matA$, respectively. The transpose of a vector $\vecx$ is represented by $\vecx\tran$ and its $\ell_2$ norm is written as $\| \vecx\|$. The cardinality of a set $\calS$ is denoted as $|\calS|$. A closed interval between two points $a$ and $b$ is given by $[a,b]$. The operator $\sign\lb p\rb$ represents the sign operation, which is defined as $\sign\lb p\rb = +1$, if $p> 0$ and $-1$, if $p\leq 0.$ The set of real numbers is denoted by $\mathbb{R}$.  We use $\calB^s_r=\lc\vecz\in\bbR^s:\lV\vecz\rV\leq r\rc$ to denote a ball of radius $r$ in~$\bbR^s$. Finally, for a set $\calT \subseteq \bbR^s$, and a function $G: \bbR^s\to\bbR^n$, we write $G(\calT) = \{G(\vecz): \vecz \in \calT\}$.

\section{System Model and the Generative Model-Based Algorithm}
We consider the problem of recovering an unknown sparse vector $\vecx^*\in\bbR^n$ from a set of one-bit measurements $\vecy\in\lc\pm1\rc^m$,  modeled as follows:
\begin{equation}\label{eq:model}
\vecy_i=\sign\lb\matA\vecx^*\rb \in\lc\pm1\rc^m
\end{equation}
where $\matA\in\bbR^{m\times n}$ is the known measurement matrix. Our goal is to find a reconstruction $\hat{\vecx}$ is such that it is close to the original signal $\vecx^*$.

Our approach is to use a generative model which is given by a deterministic function: $G:\bbR^s\to\bbR^n$. One option for the generative model is a feedforward neural network with $d$ layers given by
\begin{equation}
G(\vecz) = \phi_d \big(\phi_{d-1} (\cdots \phi_2(\phi_1(\vecz, \bs{\theta}^{(1)}), \bs{\theta}^{(2)})\ldots, \bs{\theta}^{(d-1)}), \bs{\theta}^{(d)}\big),
\end{equation}
where $\vecz \in \mathbb{R}^s$ is the latent variable, $\phi_i(\cdot)$ is the functional mapping corresponding to the $i$-th layer of the neural network, and $\bs{\theta}^{(i)} = (\bs{W}^{(i)}, \bs{b}^{(i)})$ represents the parameter pair for the $i$-th layer, which has $N_i$ nodes at its output. Here, $\bs{W}^{(i)} \in \mathbb{R}^{N_i \times N_{i-1}}$ is the weight matrix, and $\bs{b}^{(i)} \in \mathbb{R}^{N_i}$ is the bias vector. Note that $N_0 = s$ and $N_d = n$. 
Further, let $\bs{z}^{(i)}$ denote the output of the $i$-th layer, defined as $\bs{z}^{(i)} = \phi_i(\vecz^{(i-1)}, \bs{\theta}^{(i)}) = \tilde{\phi}_i(\bs{W}^{(i)} \vecz^{(i-1)} + \bs{b}^{(i)})$, where $\tilde{\phi}_i$ is the element-wise non-linear activation function used in the $i$-th layer. Common choices for $\tilde{\phi}_i$ include the ReLU function, sigmoid function, and hyperbolic tangent function.

We assume that the latent variable $\vecz$ follows a fixed distribution $p_Z$ over $\mathbb{R}^s$. During the training phase, the algorithm learns the function $G$ that maps the distribution $p_Z$ to the data distribution using the training samples. It is important to note that the training process does not involve one-bit measurements, nor does the model aim to learn how to invert the one-bit measurement function. Instead, it focuses on efficiently learning and representing the signals of interest. Specifically, the generative model is trained with sparse vectors so that the range of the generator, denoted by $\calS$, closely approximates the desired set of sparse vectors. Since the set of sparse vectors in $\mathbb{R}^n$ forms a small subset of $\bbR^n$, we choose $s \ll n$, meaning $G$ is a mapping from a low-dimensional representation space ($\mathbb{R}^s$) to a high-dimensional sample space ($\mathbb{R}^n$), which is learned by the model. 

Once we train the generator, we use it to recover the unknown sparse vector $\vecx^*$ from a set of one-bit measurements $\vecy$ by using $\calS$ as an approximation for the set of sparse vectors. To this end, we minimize the following objective function, which depends on $G(\vecz)$,
\begin{equation}\label{eq:cost}
l_{\mathrm{loss}}(G(\vecz))=\lV G(\vecz)\rV^2-\frac{\sqrt{2\pi}}{m}\vecy\tran\matA G(\vecz).
\end{equation}
The second term of the objective function maximizes the correlation between the one-bit measurements $\vecy$ and the corresponding linear measurements. For a fixed $l_2$ norm of $G(\vecz)$, the term is maximized when $\sign(\matA G(\vecz))=\vecy$. Therefore, the second term ensures the match between $\matA G(\vecz)$ and $\vecy$. However, the second term decreases as the $l_2$ norm of $G(\vecz)$ increases, and therefore, we use the first term to control the norm. Hence, the two terms of the objective function jointly optimize the representation error.  We denote
\begin{equation}\label{eq:optimization}
    \hat{\vecz} = \underset{\vecz\in\bbR^s}{\arg\min}\; l_{\mathrm{loss}}(G(\vecz)).
\end{equation}
While any optimization procedure can be used to minimize the loss function, we use the standard back-propagation algorithm to reconstruct the compressed signal. Let $\hat{\vecz}$ denote the optimization procedure output. The reconstructed signal is given by $\vecx=G(\hat{\vecz})$. 
The objective function to be optimized is non-convex, but we still use the gradient descent algorithm to solve the optimization problem. The gradient descent algorithm is expected to provide a locally optimum solution which we assume to be a good approximate solution. We corroborate this assumption empirically (as demonstrated in \Cref{sec:sim}), to show that the gradient descent solution achieves good reconstruction performance.

We conclude this section by highlighting the main advantages of the generative model-based optimization compared to classical optimization-based algorithms, such as the convex optimization-based algorithm~\cite{plan2012robust}. Firstly, the use of the generator's range in the cost function automatically eliminates the need to constrain the solution to sparse vectors. This is because our approach ensures that the obtained solution $\vecx\in\calS$, which approximates the set of sparse vectors. Quantifying non-convex sparsity constraints is a significant challenge, and traditional approaches often resort to approximations like the $\ell_1$ norm or non-convex $\ell_p$ norms (which complicate the problem). Secondly, the objective function in \eqref{eq:optimization} is minimized with respect to the latent variable $\vecz\in\bbR^s$ rather than the unknown vector $\vecx\in\bbR^n$. This results in a smaller search space, as $s \ll n$, and reduces the computational and memory complexity of the algorithm. This is critical, especially because one-bit CS is used in resource-constrained systems. Finally, unlike model-based algorithms, which are general and can only capture sparsity in the signal, the generator model can learn any additional structure in the desired set of sparse vectors that arise due to the underlying physics or system properties (e.g., images with pixel values constrained between 0 and 255). This additional knowledge can further reduce the size of the latent space, modeled by $s$, thereby decreasing both the training and reconstruction complexity of the algorithm, which depends on $s$. Moreover, the model is highly flexible and can be combined with any other cost functions and measurement models, such as the $B$-bit quantized measurement model.

\section{Theoretical Analysis: Measurement Bound}\label{sec:theory}
Our neural network-based generative model-based algorithm combines the strengths of model-based algorithms and data-driven models by overcoming some of the disadvantages of model-based approaches, as discussed above, while also retaining the strong theoretical foundations of model-based algorithms. Next, we discuss some theoretical guarantees for the algorithm above, assuming that gradient descent finds a good approximate solution to the non-convex optimization problem in \eqref{eq:optimization}.

Our analysis makes the following assumptions:
\begin{itemize}[leftmargin=0cm]
    \item [] \emph{Assumption 1:} The generator function $G$ is a $d-$layer neural network with at most $N=\max_i\;N_i$ nodes per layer where all weights are upper bounded by  $ w_{\max}$ in absolute value, and the non-linearity after each layer is $L-$Lipschitz.
    \item [] \emph{Assumption 2:} The input $\vecz$ to the model $G$ have independent entries drawn from a  uniform distribution over $\ls-\frac{r}{\sqrt{s}},\frac{r}{\sqrt{s}}\rs$ during the training phase.
    \item [] \emph{Assumption 3:} The measurement matrix   $\matA\in\bbR^{m\times n}$ is a Gaussian random matrix with independent and identically distributed entries, $\matA_{i,j}\sim\calN(0,1/m)$.
\end{itemize}
Based on the above assumptions, the main result of this section is as follows:
\begin{theorem}\label{thm:NN_error}
Suppose that Assumptions 1-3 hold, the ground truth $\vecx^*$ is such that it satisfies $\lV\vecx^*\rV = 1$, and the measurement vector $\vecy$ follows the model given by \eqref{eq:model}. Suppose $\tilde{\vecz}$ minimizes the cost function in \eqref{eq:cost} to within additive $\delta$ of the optimum over the vectors with $\lV \vecz\rV\leq r$. Then, for any $\epsilon>0$, there exist universal constants $C,c>0$ such that if \begin{equation}\label{eq:m_bound}
m\geq C\epsilon^{-2}s\lb r^2+d\log LNw_{\max} \rb,
\end{equation} 
with probability at least $1-4\exp\lb-c\epsilon^2 m\rb$, the following holds,
\begin{equation}\label{eq:l_bound}
\lV G(\tilde{\vecz})-\vecx^*\rV^2\leq \underset{\substack{\vecz\in\bbR^s\\\lV\vecz\rV\leq r}}{\min}\lV G(\vecz)-\vecx^*\rV^2+\delta+\epsilon.
\end{equation}
\end{theorem}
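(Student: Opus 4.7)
The plan is to split the argument into three parts: (i) show that the population version of $l_{\mathrm{loss}}$ coincides with squared $\ell_2$-distance to $\vecx^*$ up to a constant; (ii) prove uniform concentration of $l_{\mathrm{loss}}$ around its mean over the generator's range; and (iii) combine these via a standard approximate-minimizer comparison.

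\textbf{Step 1 (Population loss).} For any fixed $\vecw\in\bbR^n$, I would decompose each Gaussian row as $\matA_i = (\matA_i\vecx^*)\vecx^{*T} + \matA_i(\matI-\vecx^*\vecx^{*T})$, using $\lV\vecx^*\rV=1$. The scalar $\matA_i\vecx^*$ and the vector $\matA_i(\matI-\vecx^*\vecx^{*T})$ are jointly Gaussian and uncorrelated, hence independent, so
\[
\bbE\ls\sign(\matA_i\vecx^*)\,\matA_i\vecw\rs = \vecx^{*T}\vecw\,\bbE\lv\matA_i\vecx^*\rv.
\]
Summing over rows and applying the $\sqrt{2\pi}/m$ normalization in~\eqref{eq:cost} yields $\bbE[l_{\mathrm{loss}}(\vecw)] = \lV\vecw\rV^2 - 2\vecw^T\vecx^* = \lV\vecw-\vecx^*\rV^2 - 1$, so minimizing the population loss over $G(\calB^s_r)$ is equivalent to projecting $\vecx^*$ onto the generator's range.

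\textbf{Step 2 (Uniform concentration).} Since $\lV\vecw\rV^2$ is deterministic, it suffices to control the stochastic functional $T(\vecw):=(\sqrt{2\pi}/m)\vecy^T\matA\vecw$ uniformly over $\vecw\in G(\calB^s_r)$. For fixed $\vecw$, the $m$ summands of $T(\vecw)$ are independent and sub-Gaussian with norm $O(\lV\vecw\rV)$, so Hoeffding gives
\[
\bbP\lb\lv T(\vecw)-\bbE T(\vecw)\rv > t\rb \leq 2\exp\lb-cmt^2/\lV\vecw\rV^2\rb.
\]
To upgrade to a supremum, I would build an $\eta$-net of $\calB^s_r$ of cardinality at most $(3r/\eta)^s$, push it forward through $G$ using the layerwise Lipschitz estimate
\[
\lV G(\vecz_1)-G(\vecz_2)\rV \leq (Lw_{\max}N)^d\,\lV\vecz_1-\vecz_2\rV
\]
(obtained from Assumption~1 by composing the $L$-Lipschitz activations with weight matrices of operator norm $\leq w_{\max}N$), then union-bound the Hoeffding tail over the pushed net and handle the residual between each $\vecz$ and its nearest net point by a Lipschitz continuity argument for $T\circ G$. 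Choosing $\eta$ to balance the discretization penalty against the Hoeffding exponent produces the uniform deviation $\epsilon/2$ with probability at least $1-4e^{-c\epsilon^2 m}$ whenever $m$ satisfies~\eqref{eq:m_bound}. Roughly, the $sd\log(Lw_{\max}N)$ summand arises from the log-covering number after the Lipschitz push-forward, while the $sr^2$ summand reflects the Hoeffding scale associated with $\lV G(\vecz)\rV$ on the search ball.

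\textbf{Step 3 (Comparison and main obstacle).} Let $\vecz^*\in\arg\min_{\lV\vecz\rV\leq r}\lV G(\vecz)-\vecx^*\rV^2$. On the event from Step~2,
\begin{align*}
\lV G(\tilde\vecz)-\vecx^*\rV^2 - 1 &= \bbE[l_{\mathrm{loss}}(G(\tilde\vecz))] \leq l_{\mathrm{loss}}(G(\tilde\vecz)) + \tfrac{\epsilon}{2}\\
&\leq l_{\mathrm{loss}}(G(\vecz^*)) + \delta + \tfrac{\epsilon}{2} \leq \bbE[l_{\mathrm{loss}}(G(\vecz^*))] + \delta + \epsilon\\
&= \lV G(\vecz^*)-\vecx^*\rV^2 - 1 + \delta + \epsilon,
\end{align*}
where the middle inequality is the $\delta$-approximate minimality of $\tilde\vecz$ and the outer two come from Step~2; rearranging yields exactly~\eqref{eq:l_bound}. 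The hard part is Step~2: because the Hoeffding variance scales with $\lV\vecw\rV^2$, a naive bound $\lV G(\vecz)\rV\leq (Lw_{\max}N)^d r$ would make the $r^2$ and $d\log(Lw_{\max}N)$ terms appear multiplicatively rather than additively. Recovering the stated additive form requires a tighter control of $\lV G(\vecz)\rV$ on $\calB^s_r$ together with a careful choice of the net scale $\eta$ that absorbs the Lipschitz push-forward into the $d\log(Lw_{\max}N)$ term rather than into the concentration exponent.
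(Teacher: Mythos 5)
Your Steps 1 and 3 coincide with the paper's argument: the population identity $\bbE\ls l_{\mathrm{loss}}(\vecw)\rs=\lV\vecw\rV^2-2\vecw\tran\vecx^*=\lV\vecw-\vecx^*\rV^2-1$ is exactly the content of \Cref{lem:lambda} (the $\bbE\ls\sign(a)a\rs=\sqrt{2/\pi}$ computation), and your approximate-minimizer sandwich is the paper's Step A, there carried out via the identity $\lV\bar{\vecx}\rV^2+2(\hat{\vecx}-\bar{\vecx})\tran\vecx^*=\lV\bar{\vecx}-\vecx^*\rV^2+\lV\hat{\vecx}\rV^2-\lV\hat{\vecx}-\vecx^*\rV^2$. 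The genuine gap is your Step 2, which carries all of the quantitative content of \eqref{eq:m_bound}, and which you yourself concede is unresolved. With a per-point Hoeffding bound the sub-Gaussian scale of $T(\vecw)$ is proportional to $\lV\vecw\rV$, and on $G(\calB^s_r)$ the only generic norm control is the Lipschitz one, $\lV G(\vecz)\rV\lesssim (LNw_{\max})^d r$ (plus $\lV G(\zero)\rV$); a union bound over your net then requires $m\gtrsim \epsilon^{-2}s\,r^2(LNw_{\max})^{2d}\bigl(d\log(LNw_{\max})+\log(r/\eta)\bigr)$, i.e.\ a multiplicative rather than the claimed additive dependence. Moreover, your residual ("Lipschitz continuity of $T\circ G$") term involves the random quantity $\lV\matA\rV_{\mathrm{op}}\approx\sqrt{n/m}$, forcing the net scale $\eta$ to shrink with $n$ and $\epsilon$ and leaving extra $\log(\sqrt{n}/\epsilon)$ factors absent from \eqref{eq:m_bound}. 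Flagging these obstacles does not discharge them, so the measurement bound is not proved.

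For comparison, the paper avoids both obstacles by a different mechanism. It never concentrates the loss pointwise-plus-union-bound: it invokes the Plan--Vershynin concentration (\Cref{lem:concen}), in which the uniform deviation over a set $\calT$ is $4\wid(\calT)/\sqrt{m}+t$ with failure probability $4\exp(-mt^2/8)$, so the geometry of the generator's range enters only additively through $\wid(\calS)/\sqrt{m}$ and not through a per-point variance times a log-cardinality; the norm issue you worry about is handled by confining the comparison class to the unit-norm part of the range (the constraints $\lV\vecx'\rV\leq 1$ in the definitions of $\vecx$ and $\bar{\vecx}$ in the paper's Step A, made explicit as $\lV G(\vecz)\rV\leq 1$ in \Cref{cor:noisy}), under which the Plan--Vershynin tail applies with an $O(1)$ scale. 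The width itself is bounded (Step B) by covering $\calB^s_r$ at the \emph{coarse} scale $t=4r$ -- not at a scale shrinking with $\epsilon$ -- pushing the net through the $(LNw_{\max})^d$-Lipschitz generator (\Cref{lem:covering,lem:lipschitz}), and splitting $\wid(\calS)$ into a cell-diameter term of order $r\sqrt{s}$ plus $C'\sqrt{s\,d\log(LNw_{\max})}$ from the finite-set width (\Cref{lem:GWidth}). In particular the $s r^2$ summand in \eqref{eq:m_bound} is a discretization-error term inside the width bound, not a Hoeffding-scale term as your closing heuristic suggests. To salvage your route you would need to reproduce exactly this structure (a norm-restricted comparison class and a deviation controlled by a width/chaining functional rather than by per-point variance and net cardinality); as written, Step 2 does not deliver \eqref{eq:m_bound} or \eqref{eq:l_bound}.
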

\begin{proof}
    See~\Cref{app:NN_error}.
\end{proof}

Before examining insights from the algorithm, we first discuss the intuition and justification behind the assumptions.  Assumption 1 states that all activation functions in the generative model are Lipschitz. This is reasonable since commonly used activation functions, such as ReLU, sigmoid, and hyperbolic tangent, are 1-Lipschitz. This assumption is both practical and flexible, allowing a variety of neural network designs.  The second assumption requires the input $\vecz$ to follow a uniform distribution. Although the assumption looks stringent, the proof only requires $\vecz$ to be bounded. While Assumption 2 ensures that $\lV\vecz\rV\leq s\lV\vecz\rV_{\infty}=r$, any bounded distribution or light-tailed distribution (such as those with exponential decay) satisfies this requirement with a high probability for a suitable choice of $r$. This makes our framework more general and the assumption non-restrictive.  Finally, Assumption 3 enforces that the measurement matrix is Gaussian, which is standard in CS. It is necessary to derive the concentration inequality using Gaussian mean width, a well-established tool in the literature.  

\subsection{Discussion}
We now discuss insights from the main result. Our error bound has three terms, the first being representation error. It arises because the generator’s range may not perfectly match the set of sparse vectors, making the exact representation of $\vecx^* \in \calS$ impossible. Thus, $\lV G(\vecz) - \vecx^* \rV^2$ may not be 0 for any $\vecz$ with $\lV\vecz\rV \leq r$, and the first term in \eqref{eq:l_bound} is the best achievable error under imperfect training. The second term $\delta$ accounts for the fact that the gradient descent does not necessarily converge to the global optimum. We note that $\delta$ is the difference in the cost function and the optimum optimization variable, i.e., 
\begin{equation}
    l_{\mathrm{loss}}(\tilde{\vecz})\leq \underset{\substack{\vecz\in\bbR^s,\lV\vecz\rV\leq r\\\lV G(\vecz)\rV\leq 1}}{\min}l_{\mathrm{loss}}(\vecz)+ \delta.
\end{equation}
Finally, the error term $\epsilon$ can be controlled by adjusting the number of measurements, and it can be driven close to zero. Empirically, we observe that the reconstruction error converges to zero, leading to the conclusion that with appropriate training, the generator’s range can effectively approximate the set of sparse vectors, and the gradient descent algorithm yields a good solution.

Next, we discuss the measurement bound and its dependence on the latent distribution and network parameters. As $r$ increases, the number of required measurements grows, while the estimation error decreases. This is captured by the first term in \eqref{eq:l_bound} that monotonically decreases with $r$. This observation is intuitive because as $r$ increases, the generator’s domain expands, so the search space for optimization in \eqref{eq:optimization} expands, requiring more measurements for resolvability. Additionally, as the domain expands, the range expands, leading to less representation error, and thereby improved accuracy. Similarly, as the network parameters, $s,d,N,L$, and $w_{\max}$ increase, the number of required measurements increases. The reason is that as these parameters increase, the network becomes more flexible, thus allowing the range of $G(\vecz)$ to expand. Consequently, the first term in the error bound decreases. Hence, an increase in the number of required measurements results in an improved error bound, as expected. Further, it is interesting to note that, unlike the traditional CS guarantees, the measurement bound does not directly depend on the dimension of the unknown sparse vector $n$. However, the dependence on $n$ is implicitly captured by the term $N$ as 
\begin{equation}
N=\max_i\;N_i\geq N_d=n.    
\end{equation}
Therefore, similar to the traditional CS guarantees, here $m$ grows logarithmically with $n$ if we choose $N=\Omega(n)$. 

\subsection{Extension to Noisy Measurement Model}
Our results can be extended to the noisy measurement case. So we consider the noisy measurements given by
\begin{equation}\label{eq:model_corrupt}
    \vecy=\bs{\eta}\odot\sign(\matA\vecx^*),
\end{equation}
where $\odot$ is the Hadamard product and $\bs{\eta}\in\{\pm1\}^m$ represents the corruption. If $\bs{\eta}_i=1$, there is no corruption in $\vecy_i$, and if $\bs{\eta}_i=-1$, the sign of $\vecy_i$ is flipped. 
\begin{cor}\label{cor:noisy}
Suppose that Assumptions 1-3 hold and the ground truth $\vecx^*$ is such that it satisfies $\lV\vecx^*\rV = 1$. Let the measurement vector $\vecy$ follow the model given by \eqref{eq:model_corrupt} where entries of $\bs{\eta}$ are independent and follow a Rademacher distribution satisfying $\bbP\{\bs{\eta}_i=1\}=\alpha\in(0.5,1]$. Suppose $\tilde{\vecz}$ minimizes the cost function in \eqref{eq:cost} to within additive $\delta$ of the optimum over the vectors with $\lV \vecz\rV\leq r$ and $\lV G(\vecz)\rV\leq 1$. Then, for any $\epsilon>0$ there exist universal constants $C,c>0$ such that if \eqref{eq:m_bound} holds,
with probability at least $1-4\exp\lb-c\epsilon^2 m\rb$
\begin{multline}\label{eq:l_bound_noisy}
\lV G(\tilde{\vecz})-\vecx^*\rV^2\leq\underset{\substack{\vecz\in\bbR^s,\lV\vecz\rV\leq r\\\lV G(\vecz)\rV\leq 1}}{\min}\lV G(\vecz)-\vecx^*\rV^2\\+\frac{\delta+\epsilon}{2\alpha-1}+\frac{2(1-\alpha)}{2\alpha-1}.
\end{multline}
\end{cor}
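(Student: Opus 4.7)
The strategy is to follow the template of the proof of Theorem 1, with the expectation of $l_{\mathrm{loss}}$ replaced by its noisy analogue. Since $\bs{\eta}$ is independent of $\matA$ and satisfies $\bbE[\bs{\eta}_i] = 2\alpha - 1$, a direct computation using the fact that $\bbE[\sign(\matA_i \vecx^*)\matA_i\vecx] = \sqrt{2/\pi}\,\langle\vecx^*,\vecx\rangle/\lV\vecx^*\rV$ for a Gaussian row $\matA_i$ yields
\begin{equation*}
\bbE\bigl[l_{\mathrm{loss}}(\vecx)\bigr] = (2\alpha-1)\lV\vecx-\vecx^*\rV^2 + 2(1-\alpha)\lV\vecx\rV^2 - (2\alpha-1),
\end{equation*}
where the expectation is jointly over $\matA$ and $\bs{\eta}$. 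This identity already isolates the two features that appear in \eqref{eq:l_bound_noisy}: the multiplicative factor $(2\alpha-1)$ in front of the reconstruction error, whose reciprocal will become the scaling $1/(2\alpha-1)$ of $\delta+\epsilon$, and the additive bias $2(1-\alpha)\lV\vecx\rV^2$, which on the feasible set $\lV G(\vecz)\rV\leq 1$ contributes at most the stated $2(1-\alpha)/(2\alpha-1)$.

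The key technical step is then a uniform concentration bound
\begin{equation*}
\sup_{\vecz\in\calB^s_r}\bigl|l_{\mathrm{loss}}(G(\vecz)) - \bbE\bigl[l_{\mathrm{loss}}(G(\vecz))\bigr]\bigr| \leq \epsilon/2,
\end{equation*}
holding with probability at least $1-4\exp(-c\epsilon^2 m)$ under \eqref{eq:m_bound}. To obtain it, I would split the fluctuation as
\begin{equation*}
l_{\mathrm{loss}}(\vecx) - \bbE\bigl[l_{\mathrm{loss}}(\vecx)\bigr] = \bigl[\bar l(\vecx) - \bbE\bar l(\vecx)\bigr] - \frac{\sqrt{2\pi}}{m}\sum_{i=1}^{m}\bigl[\bs{\eta}_i-(2\alpha-1)\bigr]\sign(\matA_i\vecx^*)\matA_i\vecx,
\end{equation*}
where $\bar l(\vecx) = \lV\vecx\rV^2 - \frac{\sqrt{2\pi}}{m}\sign(\matA\vecx^*)\tran\matA\vecx$ is the noiseless cost of Theorem 1. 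The first bracket is controlled uniformly by the Gaussian-mean-width / Lipschitz-net argument that already underlies Theorem 1. The second term is, conditional on $\matA$, a sum of independent zero-mean bounded random variables, so Hoeffding's inequality handles each fixed $\vecx$; the same $\epsilon$-net over $G(\calB^s_r)$, whose log-cardinality is $O(s\,d\log LNw_{\max})$ via the Lipschitz bound on $G$, together with a union bound, promotes this to the uniform bound. The tail probabilities of the two parts combine into the advertised $4\exp(-c\epsilon^2 m)$.

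Given the concentration, the corollary follows from a standard $\delta$-optimality argument. Let $\vecz^{\dagger}$ minimize $\lV G(\vecz)-\vecx^*\rV^2$ over $\lV\vecz\rV\leq r$, $\lV G(\vecz)\rV\leq 1$. From $l_{\mathrm{loss}}(G(\tilde\vecz)) \leq l_{\mathrm{loss}}(G(\vecz^{\dagger})) + \delta$, applying the concentration bound to both sides and substituting the expectation identity give
\begin{equation*}
(2\alpha-1)\lV G(\tilde\vecz)-\vecx^*\rV^2 + 2(1-\alpha)\lV G(\tilde\vecz)\rV^2 \leq (2\alpha-1)\lV G(\vecz^{\dagger})-\vecx^*\rV^2 + 2(1-\alpha)\lV G(\vecz^{\dagger})\rV^2 + \delta + \epsilon.
\end{equation*}
Dropping the nonnegative term $2(1-\alpha)\lV G(\tilde\vecz)\rV^2$ on the left, bounding $\lV G(\vecz^{\dagger})\rV^2 \leq 1$ on the right, and dividing by $(2\alpha-1)>0$ yields \eqref{eq:l_bound_noisy}. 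The main obstacle is the uniform concentration step, where the bounded-Rademacher perturbation in $\bs{\eta}-(2\alpha-1)\one$ and the sign-nonlinearity of the Gaussian part must be controlled simultaneously on the non-convex manifold $G(\calB^s_r)$; the measurement count in \eqref{eq:m_bound} is exactly what the chaining / covering argument requires to make both sources of randomness concentrate jointly.
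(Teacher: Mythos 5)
Your proposal is correct in substance and arrives at the bound by the same overall skeleton as the paper ($\delta$-optimality comparison against the best representation point, a concentration step, the $(2\alpha-1)$-scaled expectation identity, and the final algebra that uses $\lV G(\vecz)\rV\leq 1$ to absorb the bias into $2(1-\alpha)/(2\alpha-1)$ before dividing by $2\alpha-1$); your expectation identity $\bbE[l_{\mathrm{loss}}(\vecx)]=(2\alpha-1)\lV\vecx-\vecx^*\rV^2+2(1-\alpha)\lV\vecx\rV^2-(2\alpha-1)$ is exactly the paper's rearrangement in disguise. Where you genuinely diverge is the concentration step. The paper does not build any new concentration argument: its Lemma on $f_{\vecx^*}$ and the Gaussian-mean-width concentration (Plan--Vershynin, quoted as \Cref{lem:lambda,lem:concen}) are stated for an arbitrary \emph{random} $\psi$, so the sign-flip noise is absorbed simply by setting $\psi(\cdot)=\eta_i\sign(\cdot)$, which only changes the constant to $\bbE\{\psi(a)a\}=(2\alpha-1)\sqrt{2/\pi}$; the entire Theorem~1 machinery (mean width of $G(\calB^s_r)$, choice of $\beta$) is then reused verbatim. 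You instead split the noisy empirical process into the noiseless part plus a Rademacher fluctuation $\frac{\sqrt{2\pi}}{m}\sum_i[\bs{\eta}_i-(2\alpha-1)]\sign(\matA_i\vecx^*)\matA_i\vecx$ and control the latter by Hoeffding plus an $\epsilon$-net over $G(\calB^s_r)$. This can be made to work and is more self-contained, but it is heavier than necessary and your sketch glosses over a real technicality: conditional on $\matA$ the summands are bounded only by the $\matA$-dependent quantities $2\lv\matA_i\vecx\rv$, so the Hoeffding tail involves $\lV\matA\vecx\rV^2$, which must itself be controlled uniformly over the net (and the process must be shown not to vary too much between net points), adding further union bounds before the stated $4\exp(-c\epsilon^2 m)$ probability and the measurement count in \eqref{eq:m_bound} are recovered. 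The paper's route buys you all of this for free by exploiting that the existing concentration lemma already tolerates a random $\psi$; your route buys transparency about exactly how the noise enters, at the cost of redoing a uniform concentration argument that the cited lemma already provides.
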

\begin{proof}
See \Cref{app:noisy}.
\end{proof}
We note that the noisy measurements introduce an additional error term given by $ \frac{2(1-\alpha)}{2\alpha-1} $, which vanishes as $ \alpha$ approaches $ 1 $ and increases the $\delta+\epsilon$ error by a factor of $ \frac{1}{2\alpha-1} $. Moreover, as $\alpha $ approaches $ 0.5 $, the error diverges to infinity.  Also, the bound $ \lV G(\tilde{\vecz}) - \vecx^* \rV^2 \leq 2 $ implies that the theoretical bound remains useful only if $\alpha > 2/3 $, ensuring that $ \frac{2(1-\alpha)}{2\alpha-1} < 2 $. Therefore, to theoretically guarantee a successful recovery, we require more than $2/3 $ correct measurements on average. Additionally, the above results imposes an extra condition that $\lV G(\vecz)\rV\leq 1$. However, if the noise parameter $\alpha$ is known to the algorithm, we can modify the cost function to incorporate this information. This modification removes the bound on $\lV G(\vecz)\rV$ and eliminates the extra bias term $ \frac{2(1-\alpha)}{2\alpha-1} $ in the error, implying $\alpha$ can be less than $2/3$.
\begin{cor}\label{cor:noisy_moremes}
Suppose that Assumptions 1-3 hold and the ground truth $\vecx^*$ is such that it satisfies $\lV\vecx^*\rV = 1$. Let the measurement vector $\vecy$ follow the model given by \eqref{eq:model_corrupt} where entries of $\bs{\eta}$ are independent and follow a Rademacher distribution satisfying $\bbP\{\bs{\eta}_i=1\}=\alpha\in(0.5,1]$. Suppose $\tilde{\vecz}$ minimizes the following to within additive $\bar{\delta}$ of the optimum over the vectors with $\lV \vecz\rV\leq r$,
\begin{equation}
   \bar{l}_{\mathrm{loss}}(G(\vecz)) = \lV G(\vecz)\rV^2-\frac{\sqrt{2\pi}}{m(2\alpha-1)}\vecy\tran\matA G(\vecz).
\end{equation}
 Then, for any $\epsilon>0$ there exist universal constants $C,c>0$ such that if \eqref{eq:m_bound} holds, with probability at least $1-4\exp\lb-c\epsilon^2 m\rb$, 
\begin{equation}\label{eq:l_bound_noisy_mod}
\lV G(\tilde{\vecz})-\vecx^*\rV^2\leq \underset{\substack{\vecz\in\bbR^s\\\lV\vecz\rV\leq r}}{\min}\lV G(\vecz)-\vecx^*\rV^2+\bar{\delta}+\frac{\epsilon}{2\alpha-1}.
\end{equation}
\end{cor}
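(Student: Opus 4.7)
The plan is to reduce \Cref{cor:noisy_moremes} to \Cref{thm:NN_error} by exploiting the fact that the $1/(2\alpha-1)$ normalization in $\bar{l}_{\mathrm{loss}}$ is chosen precisely to debias the sign-flip corruption in expectation. Since $\bs{\eta}$ and $\matA$ are independent and $\bbE[\bs{\eta}_i]=2\alpha-1$, conditioning on $\matA$ yields $\bbE_{\bs{\eta}}[\vecy_i/(2\alpha-1)\mid\matA]=\sign(\matA_i\vecx^*)=:\vecy^{\circ}_i$. Therefore the rescaled noisy measurement is an unbiased surrogate of the clean measurement, and one directly verifies $\bbE[\bar{l}_{\mathrm{loss}}(G(\vecz))]=\bbE[l_{\mathrm{loss}}^{\circ}(G(\vecz))]$, where $l_{\mathrm{loss}}^{\circ}$ denotes the noiseless cost \eqref{eq:cost} instantiated with $\vecy^{\circ}$. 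The target functional driving the recovery guarantee therefore remains $\lV G(\vecz)-\vecx^*\rV^2$, exactly as in \Cref{thm:NN_error}.

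First I would decompose
\begin{equation}
\bar{l}_{\mathrm{loss}}(G(\vecz))=l_{\mathrm{loss}}^{\circ}(G(\vecz))-\frac{\sqrt{2\pi}}{m(2\alpha-1)}\vece\tran\matA G(\vecz),
\end{equation}
with $\vece := \vecy-(2\alpha-1)\vecy^{\circ}$, whose entries are, conditional on $\matA$, independent, zero-mean, and bounded by $2$ in absolute value. The proof of \Cref{thm:NN_error} already delivers, under \eqref{eq:m_bound}, a uniform concentration of $l_{\mathrm{loss}}^{\circ}(G(\vecz))$ around its expectation on $\lV\vecz\rV\leq r$. Hence only the residual noise term has to be controlled uniformly.

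Next, conditioning on $\matA$, the scalar $\vece\tran\matA G(\vecz)$ is sub-Gaussian in $\bs{\eta}$ with variance proxy proportional to $\lV\matA G(\vecz)\rV^2$, by Hoeffding's lemma. Standard Gaussian-matrix concentration controls $\lV\matA G(\vecz)\rV$ by $\lV G(\vecz)\rV$ uniformly over $G(\calB^s_r)$, and the Lipschitz/bounded-weight hypotheses on $G$ bound the latter. Wrapping a covering net on $G(\calB^s_r)$ around these pointwise estimates, using the same Gaussian-mean-width bound scaling as $\sqrt{s(r^2+d\log LNw_{\max})}$ that is central to \Cref{thm:NN_error}, yields
\begin{equation}
\sup_{\lV\vecz\rV\leq r}\left|\frac{\sqrt{2\pi}}{m(2\alpha-1)}\vece\tran\matA G(\vecz)\right|\leq\frac{\epsilon}{2(2\alpha-1)},
\end{equation}
with probability at least $1-2\exp(-c\epsilon^2 m)$ whenever \eqref{eq:m_bound} holds. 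Combining with the noiseless uniform bound produces a uniform concentration of $\bar{l}_{\mathrm{loss}}(G(\vecz))$ around $\lV G(\vecz)-\vecx^*\rV^2-\lV\vecx^*\rV^2$ at level $\epsilon/(2\alpha-1)$.

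Finally, I would close the argument with the standard optimization-plus-concentration template. Let $\vecz_0$ minimize $\lV G(\vecz)-\vecx^*\rV^2$ over $\lV\vecz\rV\leq r$. Chaining the $\bar{\delta}$-near-optimality of $\tilde{\vecz}$ in $\bar{l}_{\mathrm{loss}}$ with the uniform concentration on both $\tilde{\vecz}$ and $\vecz_0$ yields
\begin{equation}
\lV G(\tilde{\vecz})-\vecx^*\rV^2\leq\lV G(\vecz_0)-\vecx^*\rV^2+\bar{\delta}+\frac{2\epsilon}{2\alpha-1},
\end{equation}
which matches \eqref{eq:l_bound_noisy_mod} after a constant rescaling of $\epsilon$. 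The main obstacle is the uniform control of the noise-induced residual: the $1/(2\alpha-1)$ multiplier scales the fluctuation and is responsible for the $\epsilon/(2\alpha-1)$ penalty in the final bound. Because $\vece$ has independent, bounded, zero-mean entries conditional on $\matA$, this step can be handled cleanly by conditioning on $\matA$, applying the same covering argument on $G(\calB^s_r)$ used to prove \Cref{thm:NN_error}, and then taking a union bound over both sources of randomness.
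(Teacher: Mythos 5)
Your high-level idea (the $1/(2\alpha-1)$ factor debiases the flipped signs, then run the usual optimization-plus-concentration template) is the same one the paper uses, but the implementation diverges, and the step where you control the noise residual has a genuine gap. The paper never splits off a residual: it keeps the noisy $\vecy$ inside the empirical process $f_{\vecx^*}$ of \Cref{lem:lambda} with the random function $\psi(a)=\eta\,\sign(a)$, for which $\bbE\{\psi(a)a\}=(2\alpha-1)\sqrt{2/\pi}$; plugging the $\bar{\delta}$-near-optimality of $\bar{l}_{\mathrm{loss}}$ (whose $1/(2\alpha-1)$ exactly cancels that mean) into \Cref{lem:concen} once, the only effect of the noise is that the fluctuation $\frac{4}{\sqrt{m}}\lb\wid(\calS)+\beta\rb$ is multiplied by $\frac{1}{2\alpha-1}$, which is where the $\epsilon/(2\alpha-1)$ comes from; Steps B and C of \Cref{app:NN_error} are then reused verbatim.

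The gap in your version is the claim that $\sup_{\lV\vecz\rV\leq r}\frac{\sqrt{2\pi}}{m(2\alpha-1)}\lv\vece\tran\matA G(\vecz)\rv\leq\frac{\epsilon}{2(2\alpha-1)}$ holds with probability $1-2\exp(-c\epsilon^2m)$ ``whenever \eqref{eq:m_bound} holds.'' With your ingredients this does not follow: conditional on $\matA$, Hoeffding gives a variance proxy proportional to $\lV\matA G(\vecz)\rV^2\approx\lV G(\vecz)\rV^2$, and the only bound you invoke on $\lV G(\vecz)\rV$ is the Lipschitz/bounded-weight one, of order $r(LNw_{\max})^d$. Pushing the deviation of $\frac{1}{m}\vece\tran\matA G(\vecz)$ down to the constant level $\epsilon$ with failure probability $\exp(-c\epsilon^2 m)$, after the union bound over your net, then forces $m\gtrsim r^2(LNw_{\max})^{2d}$, exponentially larger than the $s\lb r^2+d\log LNw_{\max}\rb$ in \eqref{eq:m_bound}. \Cref{cor:noisy} dodges this by restricting to $\lV G(\vecz)\rV\leq 1$, but \Cref{cor:noisy_moremes} has no such constraint, so you may not assume it. The clean repair is to note that your residual $\frac{1}{m}\vece\tran\matA\vecv$ is itself of the form $f_{\vecx^*}(\vecv)$ in \Cref{lem:lambda} with $\psi(a)=\lb\eta-(2\alpha-1)\rb\sign(a)$, which has $\bbE\{\psi(a)a\}=0$, so \Cref{lem:concen} together with the Step-B mean-width bound controls it uniformly at level $\frac{4}{\sqrt{m}}\lb\wid(\calS)+\beta\rb$ with no dependence on $\sup\lV G(\vecz)\rV$ --- at which point your two-piece argument collapses back into the paper's one-shot argument. (A minor additional imprecision: \Cref{lem:concen} concentrates differences $f_{\vecx^*}(\vecx-\vecx')$ over pairs rather than the loss pointwise, but that is what the chaining of $\tilde{\vecz}$ against $\vecz_0$ actually needs, so it is harmless.)
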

\begin{proof}
    See~\Cref{app:noisy_moremes}.
\end{proof}
Here, the effect of $\alpha$ on the error is captured through $\frac{\epsilon}{2\alpha-1}$ and $\bar{\delta}$ as it represents the additive error with respect to the new cost function. For example, for $\vecz$ and $\tilde{\vecz}$ such that $\lV G(\vecz)\rV^2=\lV G(\tilde{\vecz})\rV^2$, we derive
\begin{align}
   \bar{\delta}&\leq \bar{l}_{\mathrm{loss}}(G(\vecz))- \bar{l}_{\mathrm{loss}}(G(\tilde{\vecz}))\\
   &=\frac{1}{2\alpha-1}[l_{\mathrm{loss}}(G(\vecz))- l_{\mathrm{loss}}(G(\tilde{\vecz}))].
\end{align}
Therefore, we see that $\bar{\delta}=\Omega(\delta/(2\alpha-1))$, implying the corresponding error term increases by a factor of $1/(2\alpha-1)$.

\subsection{Comparison With the Results in \cite{liu2020sample}}
The result in \cite{liu2020sample} uses two distance metrics. For any two vectors $\vecu,\vecv\in\bbR^M$, we define $d_{\mathrm{H}}(\vecu,\vecv)$ as the Hamming distance, and $d_{\mathrm{s}}$ as the geodesic distance, which is the normalized angle
between the vectors.
\begin{align}
\label{eq:Hamming}
   d_{\mathrm{H}}(\vecu,\vecv) =\frac{1}{M}\sum_{i=1}^M\vecu_i\neq \vecv_i\\
d_{\mathrm{s}}(\vecu,\vecv) =\frac{1}{\pi} \arccos\lb\frac{\vecu\tran\vecv}{\|\vecu\|_2\|\vecv\|_2} \rb.
\end{align}
We next state the result from  \cite{liu2020sample}.
\begin{prop}[{\cite[Theorem 4]{liu2020sample}}]\label{prop:liu_result}
Suppose Assumptions~1-3 hold with $L=1$. Also, for any $\vecx^* \in G(\mathcal{B}_r^2) \setminus B_{R_{\min}}^n$,  let $\vecy\in\bbR^{m}$ be any corrupted measurements satisfying $d_{\mathrm{H}}(\vecy, \sign(\matA\vecx^*)) \leq \tau_1$. Then, for fixed $\epsilon \in (0, 1)$ and $R_{\min} > 0$, if
\begin{equation}
m = \Omega\left( s \epsilon^{-2} \log \frac{r (N w_{\max})^d}{ R_{\min}\epsilon} \right),
\end{equation}
with probability at least $1 - e^{-\Omega(\epsilon^2 m)}$, any $\hat{\vecx} \in G(\mathcal{B}_r^2) \setminus B_{R_{\min}}^n$ with $ d_{\mathrm{H}}(\sign(\matA\hat{\vecx}), \vecy) \le \tau_2$ satisfies
\begin{equation}
d_{\mathrm{s}}(\hat{\vecx},\vecx^*) \leq \epsilon + \tau_1 + \tau_2
\end{equation} 
\end{prop}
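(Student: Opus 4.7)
The strategy is to establish a uniform binary $\epsilon$-stable embedding of the set $\calS_R := G(\calB_r^2) \setminus B_{R_{\min}}^n$ under the sign-Gaussian map, namely that with probability at least $1 - e^{-\Omega(\epsilon^2 m)}$,
\begin{equation}
    \sup_{\vecu, \vecv \in \calS_R} \bigl| d_{\mathrm{H}}(\sign(\matA\vecu), \sign(\matA\vecv)) - d_{\mathrm{s}}(\vecu, \vecv) \bigr| \leq \epsilon.
\end{equation}
Given this, the theorem follows from two triangle inequalities. Applying the embedding at $\vecu = \hat{\vecx}$, $\vecv = \vecx^*$ (both in $\calS_R$) yields $d_{\mathrm{s}}(\hat{\vecx},\vecx^*) \leq d_{\mathrm{H}}(\sign(\matA\hat{\vecx}), \sign(\matA\vecx^*)) + \epsilon$, and the metric property of $d_{\mathrm{H}}$ gives $d_{\mathrm{H}}(\sign(\matA\hat{\vecx}), \sign(\matA\vecx^*)) \leq d_{\mathrm{H}}(\sign(\matA\hat{\vecx}), \vecy) + d_{\mathrm{H}}(\vecy, \sign(\matA\vecx^*)) \leq \tau_2 + \tau_1$, completing the proof.

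\textbf{Pointwise concentration and covering.} For any fixed nonzero $\vecu, \vecv$, rotational invariance of each Gaussian row $\matA_i$ yields $\bbP\{\sign(\matA_i\vecu) \neq \sign(\matA_i\vecv)\} = d_{\mathrm{s}}(\vecu, \vecv)$, so by independence of the rows and Hoeffding's inequality, $|d_{\mathrm{H}}(\sign(\matA\vecu), \sign(\matA\vecv)) - d_{\mathrm{s}}(\vecu,\vecv)| \leq \epsilon/3$ with probability at least $1 - 2e^{-c\epsilon^2 m}$. To upgrade this to uniform control, I would construct a $\rho$-net $\calN_\rho$ of $\calB_r^s$ with $|\calN_\rho| \leq (3r/\rho)^s$. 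Each layer map $\tilde{\phi}_i(\matW^{(i)}\cdot + \vecb^{(i)})$ is $1$-Lipschitz in the activation and has $\|\matW^{(i)}\| \leq N w_{\max}$, so the overall generator is Lipschitz with constant at most $L_G := (Nw_{\max})^d$. Hence $G(\calN_\rho)$ is an $L_G\rho$-net of $G(\calB_r^2)$, and a union bound over the $|\calN_\rho|^2$ pairs in the net keeps the failure probability at $e^{-\Omega(\epsilon^2 m)}$ provided $m \gtrsim \epsilon^{-2} s \log(3r/\rho)$.

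\textbf{Off-net extension.} For arbitrary $\vecu, \vecv \in \calS_R$, choose $\vecu', \vecv' \in G(\calN_\rho)$ with $\|\vecu - \vecu'\|, \|\vecv - \vecv'\| \leq L_G \rho$. Since $\|\vecu\|, \|\vecu'\| \geq R_{\min}$, the unit vectors $\vecu/\|\vecu\|$ and $\vecu'/\|\vecu'\|$ differ by at most $2L_G\rho/R_{\min}$, which via $\arccos$-Lipschitzness on the sphere yields $|d_{\mathrm{s}}(\vecu,\vecv) - d_{\mathrm{s}}(\vecu',\vecv')| \lesssim L_G\rho/R_{\min}$. For the Hamming side, a sign flip between $\matA_i\vecu$ and $\matA_i\vecu'$ requires $|\matA_i\vecu'| \leq |\matA_i(\vecu - \vecu')|$; Gaussian anti-concentration of $\matA_i\vecu'$ (which is $\calN(0, \|\vecu'\|^2/m)$ with $\|\vecu'\| \geq R_{\min}$) gives a per-row flip probability of order $L_G\rho/R_{\min}$, and a Bernstein bound combined with a union bound over $G(\calN_\rho)$ extends this uniformly. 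Choosing $\rho \asymp R_{\min}\epsilon/L_G$ bounds both perturbations by $\epsilon/3$; the net's log-cardinality then becomes $s \log(r(Nw_{\max})^d/(R_{\min}\epsilon))$, matching the stated sample-complexity bound.

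\textbf{Main obstacle.} The delicate step is the Hamming-side off-net extension: because $\sign(\cdot)$ is discontinuous, a plain covering argument does not transfer across the quantizer, and one must count the sign-flip hyperplanes intersected by the perturbation segment. This is where the assumption $\hat{\vecx}, \vecx^* \notin B_{R_{\min}}^n$ is essential—without a positive floor on the norm, $\matA_i\vecu'$ lacks the anti-concentration margin needed to control flips, and an arbitrarily small perturbation could alter a constant fraction of signs (and, symmetrically, the direction of the vector, destroying Lipschitzness of $d_{\mathrm{s}}$). With the $R_{\min}$ margin in place, Gaussian anti-concentration and row-wise concentration combine cleanly, and all error contributions close into the single $\epsilon$ term in the final bound.
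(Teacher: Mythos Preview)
The paper does not provide its own proof of this proposition: it is explicitly quoted as \cite[Theorem~4]{liu2020sample} and is used only for comparison with the paper's main result, \Cref{thm:NN_error}. There is therefore nothing in the present paper to compare your argument against.

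That said, your proof plan is the standard and correct route for results of this type, and it is essentially how the original reference establishes the statement. The three ingredients you identify---pointwise identity $\bbP\{\sign(\matA_i\vecu)\neq\sign(\matA_i\vecv)\}=d_{\mathrm{s}}(\vecu,\vecv)$ via rotational invariance, Hoeffding concentration plus a union bound over a $\rho$-net of $\calB_r^s$ pushed forward through the $(Nw_{\max})^d$-Lipschitz generator, and an off-net extension controlling both $d_{\mathrm{s}}$ and $d_{\mathrm{H}}$ perturbations---are exactly the binary $\epsilon$-stable embedding machinery of Jacques et al.\ adapted to the generative-prior setting. Your diagnosis of the ``main obstacle'' is also on target: the discontinuity of $\sign(\cdot)$ forces an anti-concentration argument, and this is precisely where the $R_{\min}$ floor enters.

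One small technical point to tighten: you write ``$\lV\vecu'\rV\geq R_{\min}$'' for the net point, but $\vecu'\in G(\calN_\rho)$ is only guaranteed to be $L_G\rho$-close to $\vecu\in\calS_R$, so a priori $\lV\vecu'\rV\geq R_{\min}-L_G\rho$. With your eventual choice $\rho\asymp R_{\min}\epsilon/L_G$ this gives $\lV\vecu'\rV\geq R_{\min}(1-c\epsilon)\geq R_{\min}/2$, which is all you need; just make the constant explicit so the anti-concentration and $d_{\mathrm{s}}$-Lipschitz bounds go through with an absolute constant rather than exactly $R_{\min}$ in the denominator.
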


Comparing \Cref{thm:NN_error} and \Cref{prop:liu_result}, we first note that both results exhibit similar order complexity. Specifically, when $m \approx \Omega(1/\epsilon^2)$, the error scales on the order of $\epsilon$ with probability at least $1 - e^{-\Omega(\epsilon^2 m)}$. The key differences lie in the underlying assumptions and approach.

Firstly, \Cref{prop:liu_result} is not tied to a specific algorithm but merely guarantees that if an algorithm can find a solution within the range of the generator such that  
$d_{\mathrm{H}}(\sign(\matA\hat{\vecx}), \vecy) \leq \tau_2$,   
then the bound holds. However, it does not establish whether such a solution exists or how to obtain it. The algorithm mentioned in the paper is an adaptation of BIHT, which is not explicitly designed to achieve this condition through its cost function or formulation. In contrast, our result in \Cref{thm:NN_error} is derived from the optimal point of our specific cost function. Additionally, we model the possibility of reaching a non-optimal solution to our optimization problem through the term $\delta$.

Secondly, \Cref{prop:liu_result} introduces an additional parameter $R_{\min} > 0$, requiring both the true vector and the recovered vector to have a norm greater than $R_{\min}$. Moreover, the measurement bound increases logarithmically with $1/R_{\min}$, meaning that if $R_{\min}$ approaches zero, the required number of measurements becomes impractically large. Conversely, our result in \Cref{thm:NN_error} imposes no such constraint on the ground truth or recovered solution. Additionally, the set of vectors with a norm greater than $R_{\min}$ is highly non-convex, making the constraint difficult to satisfy. Enforcing this constraint in BIHT-based methods through projection is non-trivial and is not discussed in the paper. Our result does not assume any minimum norm constraint, allowing us to conceptually set $R_{\min} = 0$ without inflating the measurement bound.  

Furthermore, our result defines the reconstruction error in terms of the $\ell_2$-norm, whereas \Cref{prop:liu_result} employs the geodesic distance. These metrics are equivalent if $G(\tilde{z})$ has a unit norm, but neither result assumes this condition. However, since the measurements lack amplitude information, we are primarily concerned with direction rather than magnitude. Thus, normalizing the solution ensures both error bounds remain comparable.  

Additionally, the noisy measurement models used in our result and \Cref{prop:liu_result} differ. \Cref{prop:liu_result} assumes an adversarial noise model, where the number of sign flips is known, whereas we consider a random noise model, where only the noise statistics are known. Both models are well-studied in the literature, but ours makes fewer assumptions. Naturally, our model has a greater impact on the error term, scaling it as $\delta + \epsilon$, whereas theirs introduces only an additive term. In the absence of noise, both bounds scale similarly.  

Finally, the dependence on error term $\epsilon$ differs slightly between the two results. In \Cref{prop:liu_result}, the required number of measurements is  
$m = \Omega(\epsilon^2 \log(1/\epsilon)),$  
whereas our result provides a tighter bound of $m = \Omega(\epsilon^2)$. However, in our case, $m = \Omega(r^2)$, while \Cref{prop:liu_result} states $m = \Omega(\log r)$. Since $r$ is a fixed parameter, it does not influence the measurement bound in the same way as $\epsilon$ that directly controls the error. The dependence on other parameters such as $d,N$, and $w_{\max}$ is similar in both results. Although \Cref{prop:liu_result} assumes $L = 1$, extending it to arbitrary $L$ is straightforward, yielding a result comparable to \Cref{thm:NN_error} and maintaining similar dependence of $m$ on $L$.  

\section{Simulation Results}\label{sec:sim}
In this section, we evaluate the signal reconstruction performance of our algorithm using three publicly available datasets, the MNIST handwritten digit dataset~\cite{mnist2010LeCun}, the Fashion MNIST dataset~\cite{fashionMnist}, and the Omniglot dataset ~\cite{lake2015human}. These image datasets are sparse in the pixel intensities. We compare the performance of our algorithm against two traditional one-bit CS algorithms: the convex optimization-based algorithm (labeled as \texttt{YP})~\cite{plan2012robust}, and BIHT (labeled as \texttt{BIHT})~\cite{binarystableEmbedd} algorithm. 

For the generative models, we follow the setup from~\cite{bora2017compressed} and train VAEs~\cite{kingma2013auto} as the generative model using training images from the MNIST, Fashion-MNIST, and Omniglot datasets. The image size is $28\times 28$, resulting in an input dimension of $N=784$. We choose the input size for the generator as $s = 40$. The generator is implemented as a fully connected neural network with layers of size 40-500-500-746, while the encoder consists of layers of size 784-500-500-40. VAEs for each dataset are trained for 200 epochs with a mini-batch size of 64 using the Adam optimizer~\cite{kingma2014adam} with a learning rate of 0.001. 

To assess the reconstruction performance, we use noisy one-bit compressed measurements as
\begin{equation} \label{eq:noisymeasurements}
\bs{y}= \bs{\eta}\odot\sign(\matA\vecx^* + \bs{n} ) \in\lc\pm1\rc^m,    
\end{equation}
 where the entries of $\bs{n}$ are independent Gaussian random variables with mean zero and variance $v_n$ and entries of $\bs{\eta}$ are independent Rademacher random variables that takes values $1$ and $-1$ with probability $\alpha$ and $1-\alpha$, respectively. The columns of the measurement matrix $\matA$  are drawn uniformly from the surface of the $m$-dimensional unit hypersphere~\cite{muller1959note}.

To quantify reconstruction performance, we use the MSE and NMSE metrics, which are defined as
\begin{equation}        
\mathrm{MSE} = \|\vecx^* -  \widehat{\vecx} \|^2\quad\text{and}\quad
        \mathrm{NMSE} = \Bigg\|\frac{\vecx^*}{ \| \vecx^*\|} -  \frac{\widehat{\vecx}}{ \| \widehat{\vecx} \|}  \Bigg\|^2,
  \end{equation}
 where $\vecx^*$ and $ \widehat{\vecx}$ represent the true and the estimated signals, respectively. We generate one-bit compressed measurements for 10 images from the testing set using \eqref{eq:noisymeasurements}. Given the non-convex nature of the optimization problem defined in \eqref{eq:cost}, we perform 10 random restarts with 100 gradient descent steps per restart for signal reconstruction and report the result with the least error. For the noisy case, we average the results over 50 Monte Carlo runs for each of the ten images. 
 
 \begin{figure}
\subfloat[MSE performance]{
\hspace{-0.5cm}\label{fig:MSE}
\includegraphics[width=\linewidth/2]{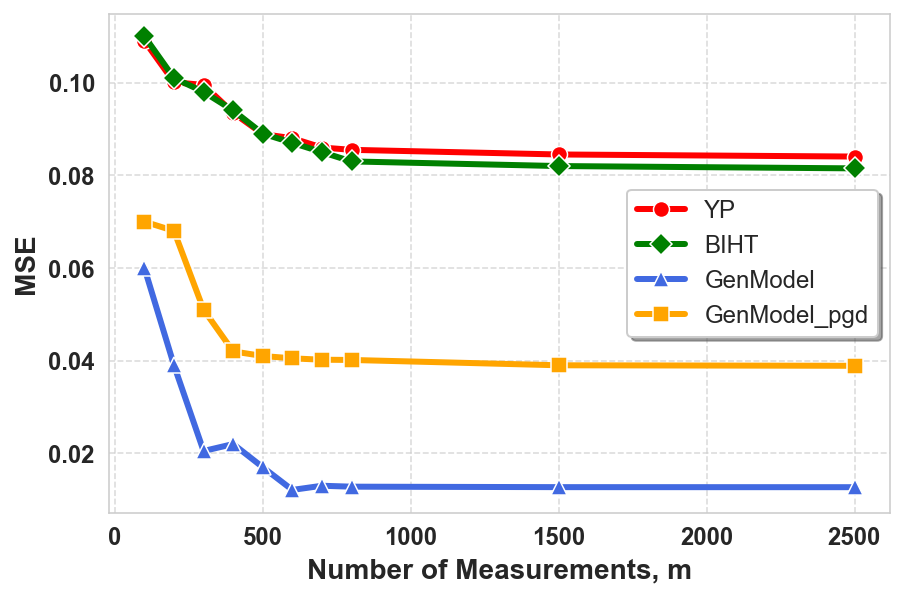}
}
\subfloat[NMSE performance]{\hspace{-0.3cm}
\label{fig:NMSE}
\includegraphics[width=\linewidth/2]{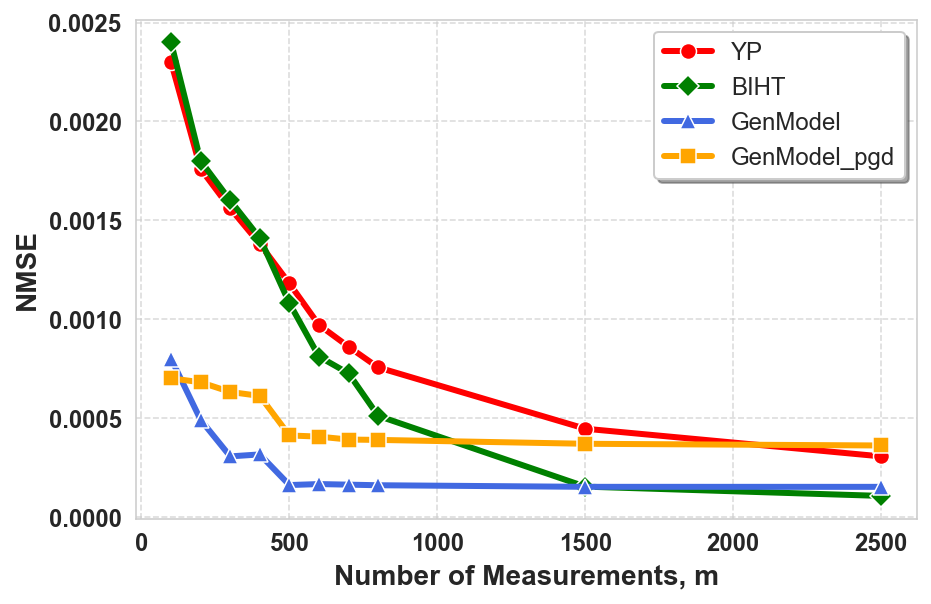}
}

\caption{Reconstruction performance of our algorithm compared with \texttt{YP}~\cite{plan2012robust}, \texttt{BIHT}~\cite{binarystableEmbedd}, and  \texttt{GenModel\_pgd}~\cite{liu2020sample} as a function of number of measurements $m$ in the noiseless setting.}
\label{fig:error}
\end{figure}

\begin{figure}[tb!]
\subfloat[MSE performance]{
\hspace{-0.5cm}\label{fig:MSE_noisy}
\includegraphics[width=\linewidth/2]{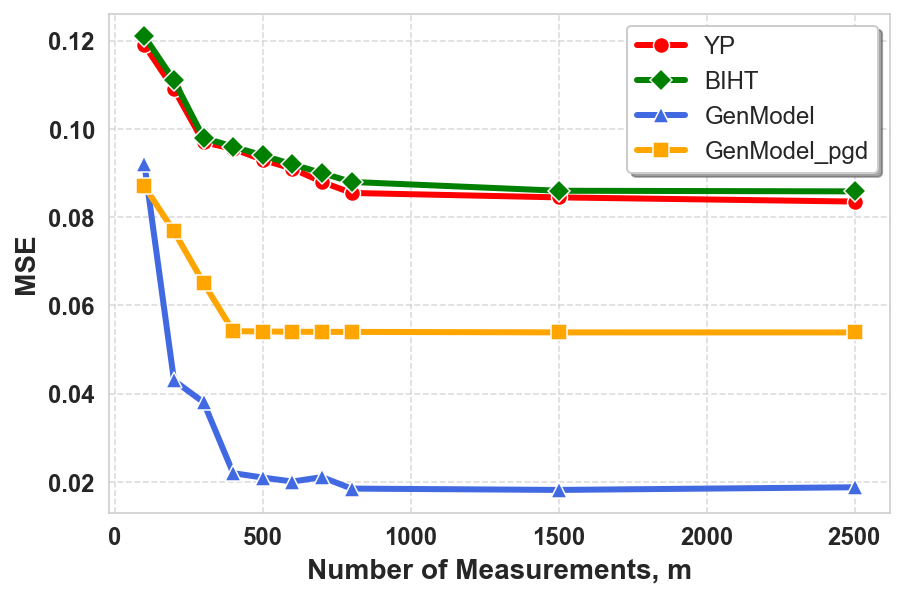}
}
\subfloat[NMSE performance]{\hspace{-0.3cm}
\label{fig:n1_NMSE_noisy}
\includegraphics[width=\linewidth/2]{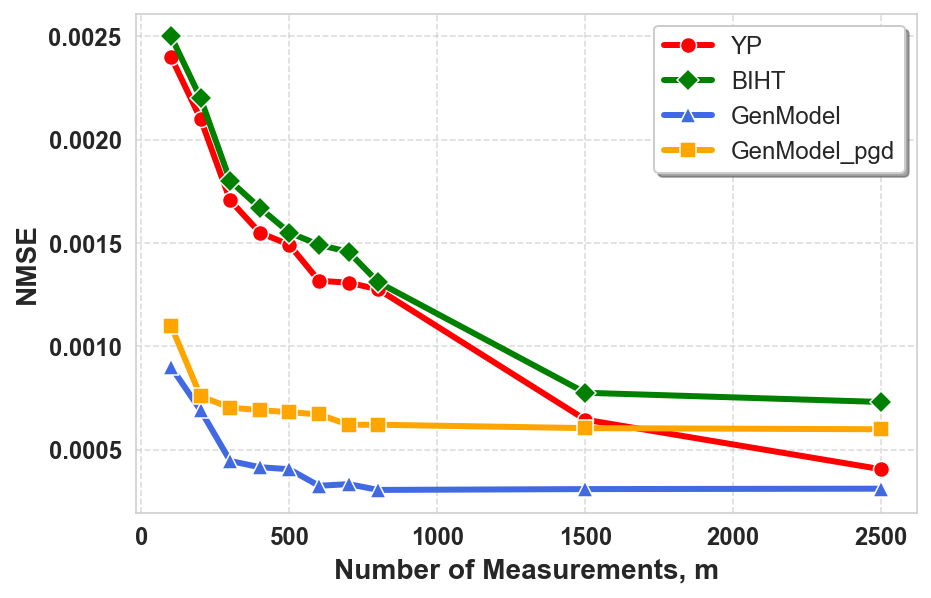}
}
\caption{Reconstruction performance of our algorithm, \texttt{GenModel\_pgd}, \texttt{BIHT}, and \texttt{YP} as a function of number of measurements $m$ when $v_n = 0.1$ and $\alpha = 0.85$.}
		\label{fig:noisyerror}
\end{figure}

\subsection{Noiseless Setting}
In \Cref{fig:error}, we present the recovery performance of our algorithm (labeled as \texttt{GenModel})  alongside the generative model-based one-bit CS algorithm in  \cite{liu2020sample} (labeled as \texttt{GenModel\_pgd}), and the traditional algorithms \texttt{YP} and \texttt{BIHT} in a noiseless scenario, i.e., when $v_n$ = 0 and $\alpha$ = 1.  As the number of measurements $m$ increases, the performance of all algorithms improves, as more information about the sparse vector becomes available.
 However, distinct performance traits of our algorithm are evident when examining the MSE and the NMSE metrics:
\subsubsection{MSE Performance} 
The generative model-based algorithms, \texttt{GenModel} and \texttt{GenModel\_pgd}, significantly outperform traditional algorithms in terms of MSE, achieving an order-of-magnitude improvement for the same $m$. This superior performance arises because generative models are capable of learning the distribution of compressed signals. When well-trained, they recover both the magnitude and direction of the compressed signal. For the same reason, \texttt{GenModel\_pgd} also performs better than the traditional algorithms. Also, \texttt{BIHT} and \texttt{YP} estimate the signals on the unit ball, limiting their accuracy, especially when the sparse signal does not lie on the unit ball.  Further, we compare the two generative model-based algorithms, our \texttt{GenModel}, which is based on a well-defined optimization problem, and the BIHT-based approach in \texttt{GenModel\_pgd}. Our algorithm clearly outperforms \texttt{GenModel\_pgd} in terms of MSE, indicating a more accurate estimation of the sparse vector.

\subsubsection{NMSE Performance}
The NMSE performance of our algorithm \texttt{GenModel} exhibits two distinct regimes. We first look at the regime with a small number of measurements, $m<1500$.  In this regime, our algorithm achieves superior reconstruction performance compared to traditional algorithms. This is because the range space of a well-trained generative model has more structural information about the signal beyond sparsity, serving as a strong prior for the compressed signal. Our algorithm leverages this well-trained generative model to compute more accurate estimates of the direction of the compressed signal using fewer one-bit measurements.  Next, we look at the regime with a large number of measurements, $m\geq1500$. When the number of measurements is sufficiently large, the traditional algorithms either match or surpass the performance of our algorithm. In \Cref{fig:NMSE}, the \texttt{BIHT} algorithm outperforms our algorithm when $m \geq 1500.$ It can be observed that the NMSE value of our algorithm stagnates after a certain value of $m$. This is because our algorithm is confined to the range space of the generative model to compute an estimate of the sparse signal, i.e., the NMSE of our algorithm is always lower-bounded by the representation error (similar to the first term in \eqref{eq:l_bound} of \Cref{thm:NN_error}). 
In contrast, the traditional algorithms continue to benefit from additional measurements, resulting in a monotonic decrease in NMSE values with an increase in $m$. 
It is also worth noting that while the generative model-based reconstruction algorithm \texttt{GenModel\_pgd} follows a similar performance trend as our algorithm, it demonstrates poorer reconstruction performance.

\begin{figure}[hptb!]
\subfloat[MSE performance]{
\hspace{-0.5cm}\label{fig:signflip}
\includegraphics[width=\linewidth/2]{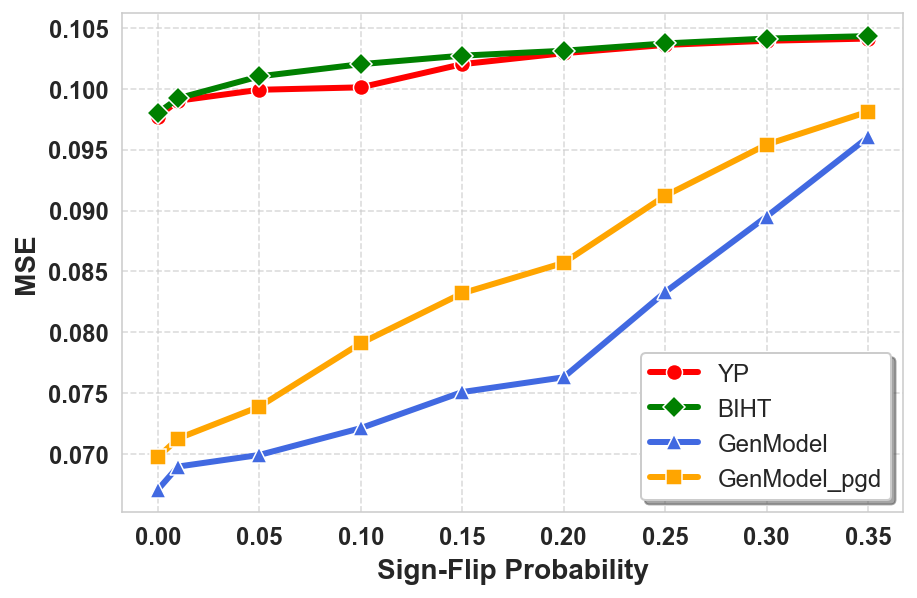}
}
\subfloat[NMSE performance]{\hspace{-0.3cm}
\label{fig:n2_NMSE}
\includegraphics[width=\linewidth/2]{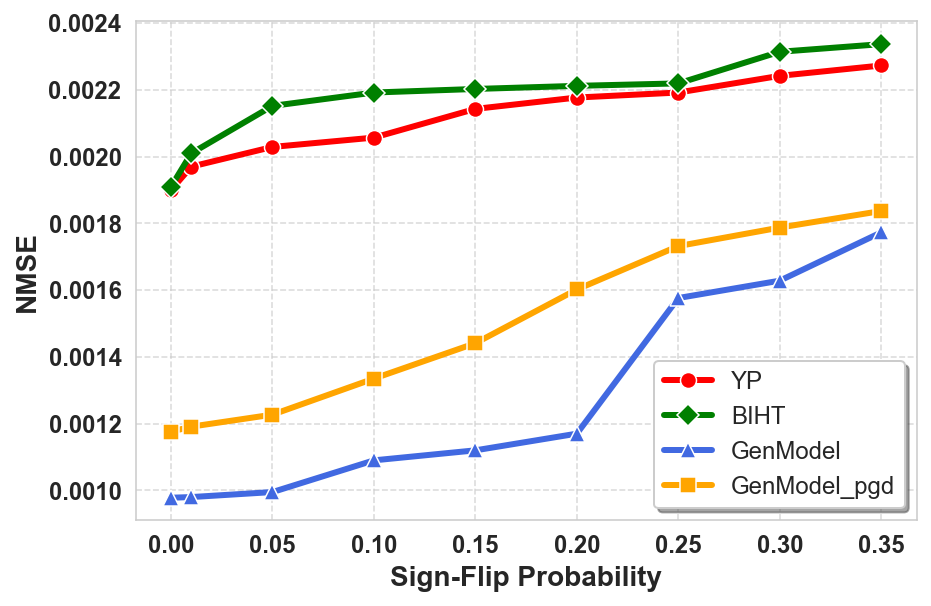}
}
\caption{Reconstruction performance of our algorithm, \texttt{GenModel\_pgd}, \texttt{BIHT}, and \texttt{YP} as a function of sign-flip probability when $m=784$ and $v_n = 0.1$.}
\label{fig:signerror}
\end{figure}

\subsection{Noisy Setting}
In this subsection, we compare the robustness of our algorithm and the traditional algorithms in the presence of  uncertainties in the measurements and measurement matrix. We consider three settings as described below:

\subsubsection{Additive Noise in Measurements} 
In \Cref{fig:noisyerror}, we provide a comparison of the recovery performance of our algorithm with \texttt{GenModel\_pgd}, \texttt{YP}, and \texttt{BIHT} when $v_n$ = 0.1 and $1-\alpha$ = 0.15. We can observe similar trends in the recovery performance of our algorithm, \texttt{GenModel\_pgd}, \texttt{BIHT}, and \texttt{YP} as in \Cref{fig:error}, but with a few notable differences. First, the reconstruction performances of all these algorithms have degraded with noise compared to the results in \Cref{fig:error}. Second, unlike in \Cref{fig:error}, the NMSE performance of \texttt{GenModel\_pgd}, \texttt{BIHT},  or  \texttt{YP} never matches or surpasses the performance of our algorithm for any value of $m$. This indicates that the performance of the traditional algorithms is more sensitive to noise. 
 
Hence, our algorithm outperforms the traditional algorithms in terms of both MSE and NMSE metrics in the noisy setup. 

\subsubsection{Sign Flips in Measurements}
In \Cref{fig:signerror}, we plot MSE and NMSE values for our algorithm, \texttt{GenModel\_pgd}, \texttt{BIHT}, and \texttt{YP} as a function of sign-flip probability $1-\alpha$ when $m = 784$, and $v_n=0.1$. These results further confirm that the proposed method has better reconstruction performance in the presence of sign-flip noise compared to both generative model-based algorithm \texttt{GenModel\_pgd} and traditional algorithms \texttt{BIHT}, and \texttt{YP}.
In \Cref{fig:signerror}, we plot MSE and NMSE values for our algorithm, \texttt{GenModel\_pgd}, \texttt{BIHT} and \texttt{YP} as a function of sign-flip probability $1-\alpha$ when $m = 784$, and $v_n=0.1$. These results further confirm that the proposed method has better reconstruction performance in the presence of sign-flip noise compared to both the generative model-based algorithm  \texttt{GenModel\_pgd} and the traditional algorithms \texttt{BIHT} and \texttt{YP}.
\begin{figure}
\centering
\includegraphics[ width=\linewidth/2]{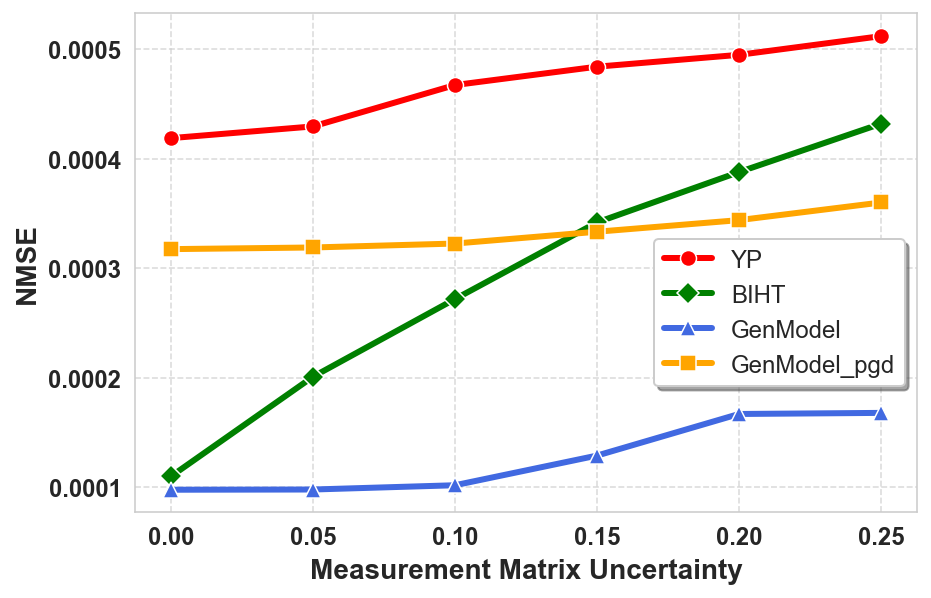}
\caption{Reconstructions performance of our algorithm, \texttt{GenModel\_pgd}, \texttt{BIHT}, and \texttt{YP} as a function of measurement matrix uncertainty, $v_\Delta$, when $m = 1500$, $\alpha=1$, and,  $v_n = 0$.}
\label{fig:uncertainties}
\end{figure}

\subsubsection{Measurement Matrix Uncertainty}
Let  $\matA^\prime$ be the perturbed measurement matrix given to the algorithm, which is given by
$\matA^\prime =  \matA + \Delta$, where $\Delta$ is the unknown perturbation in the measurement matrix. We draw entries of the perturbation $\Delta$ independently from a Gaussian distribution with zero mean and  $v_\Delta$ variance. The measurements are assumed to be noiseless, i.e., $v_n = 0$ and $\alpha=1$. In \Cref{fig:uncertainties}, we plot the NMSE performance as a function of the uncertainties, i.e., $v_\Delta$, when $m=1500$. We can see that our algorithm has the best NMSE performance which shows that our algorithm is more robust to the additive measurement matrix uncertainties. 
\begin{figure*}
\subfloat[MNIST Reconstruction]{
\includegraphics[width = \columnwidth]{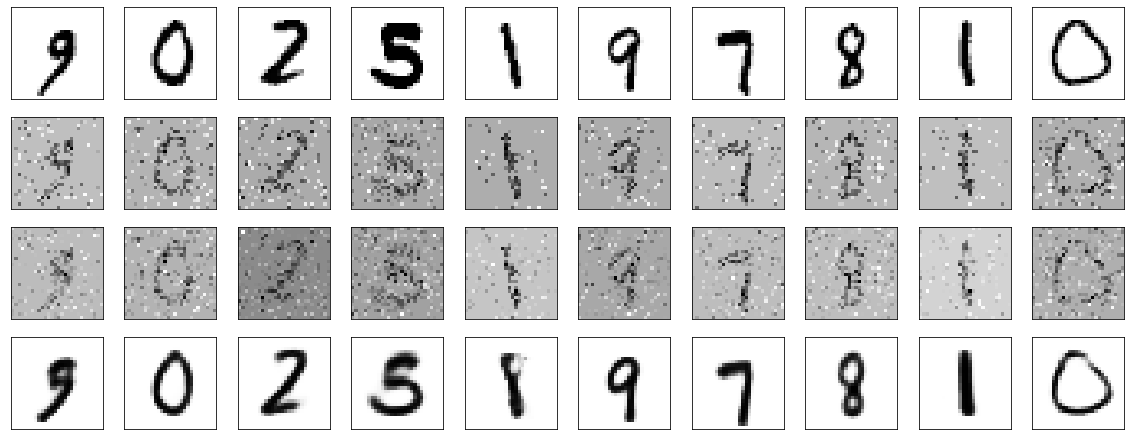}
}
\subfloat[FMNIST performance]{\hspace{-0.3cm}
\includegraphics[width=\columnwidth]{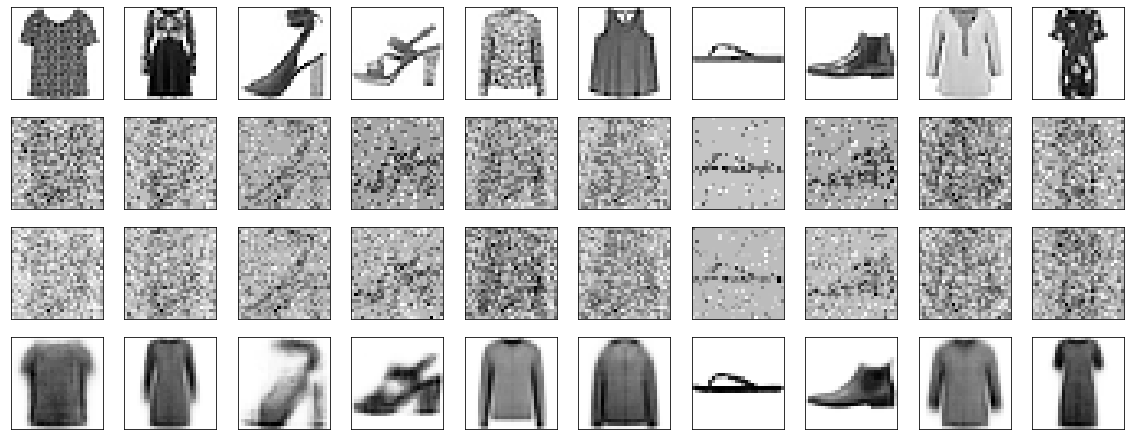}
}
\caption{The first row shows the original images, the second, third and fourth rows are the reconstruction images using \texttt{BIHT}, \texttt{YP} and our algorithm, respectively when $m = 784$ in the noiseless setting.}
		\label{fig:reconstructionImages}
\end{figure*}

\begin{figure}
\centering
\includegraphics[width= \linewidth/2]{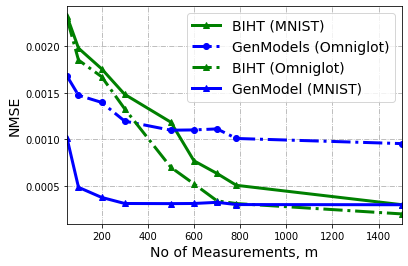}
\caption{NMSE values of our algorithm and \texttt{BIHT} using the MNIST and Omniglot datasets as a function of the number of measurements.}
\label{fig:omniglot}
\end{figure}

\subsection{Limitations}
In this subsection, we study the limitations of the deep generative model-based algorithm. Specifically, we look into the performance of our algorithm when the range space of the generative model does not faithfully represent the distribution of the compressed signal. There could be two possible cases as follows:
\subsubsection{Choice of Generative Model Architecture} It is necessary that the generative model architecture be well suited to the data that we need to learn. To investigate this effect, we study the reconstruction performance of the MNIST and the Fashion-MNIST datasets with the same VAE architecture. In \cite{fashionMnist}, the data distribution of Fashion-MNIST is shown to be more complicated compared to MNIST through tSNE visualization \cite{maaten2008visualizing}. We train VAEs for these two datasets with the same neural network architecture, over the same number of epochs with the same learning rate, and with the same optimizer. In this setup, we consider noiseless one-bit measurements for signal reconstruction. The reconstructed images in  \Cref{fig:reconstructionImages} show the superior visual quality of images from the MNIST dataset compared to the Fashion-MNIST dataset. For instance, in the clothing images in the Fashion-MNIST dataset, while the generative model learned the basic shapes of clothes, it fails to capture the finer details, such as patterns and design. Therefore, for the same network architecture, VAEs learned the distribution of the MNIST dataset but struggled to capture the more complex data distribution of the Fashion-MNIST dataset. 
From this experiment, we can conclude that the reconstruction performance of our algorithm is sensitive to the data distribution and the choice of the neural network architecture. In this scenario, we can choose generative models with better architecture for learning signal distribution, such as deep convolutional generative adversarial network or convolutional VAE.

\subsubsection{Different Data Distributions for Training and Testing} 
In practical scenarios, the signal distribution can change over time. To emulate this setup, we evaluated our algorithm, \texttt{Genmodel}, on the data from the Omniglot data. It is important to note that the generative model was trained on the MNIST dataset. The resulting NMSE values are plotted in  \Cref{fig:omniglot}. As anticipated, NMSE values for the data samples from the Omniglot dataset are higher than those from the MNIST dataset due to the high representation error (the first term in \eqref{eq:l_bound} of \Cref{thm:NN_error}). The representation error measures the distance between the compressed signal and the closest signal in the range space of the generator. This representation error increases with an increase in the mismatch between the training and the testing data distribution. For the same reason, the performance of the \texttt{BIHT} algorithm is better than \texttt{Genmodel} when $m$ is greater than $300$, and the difference in the NMSE values between the \texttt{BIHT} algorithm and  \texttt{Genmodel}  increases with $m$. However, for the MNIST dataset, the \texttt{BIHT} algorithm matches the performance of \texttt{GenModel} when $m = 1500.$ Therefore, it is crucial that the trained generative model accurately represents the signal distribution of the compressed signal. In addition, any shift in the signal distribution over time can result in degradation of reconstruction performance. Such shifts should be identified, and the generative models should be retrained to mitigate the issue.

\section{Conclusion}
We presented a one-bit CS algorithm using generative models. Unlike prior works on this topic, our approach learns the underlying structure of the signal without explicitly depending on any sparsity model. We also established reconstruction guarantees for the algorithm by characterizing the number of measurements that can achieve a given estimation error. Further, we empirically showed that our algorithm requires significantly less number of measurements for good reconstruction performance.
In contrast to the traditional algorithms, our algorithm recovered signals with both amplitude and direction information. 
Note that all the theoretical results presented in this work are applicable only to Gaussian measurement matrices. Extending these results to a broader class of measurement matrices is a direction for future work.

\appendices
\crefalias{section}{appendix}

\section{Toolbox}
We start with the mathematical toolbox, definitions, and results from the literature required to prove the results in the paper.

\begin{lemma}[{\cite[Lemma 4.1]{plan2012robust}}]\label{lem:lambda}
Let $\vecx^*,\vecx\in\bbR^N$ be such that $\lV\vecx^*\rV= 1$. Also, let the function $f_{\vecx^*}(\vecx)$ be
\begin{equation}\label{eq:f_fnt}
 f_{\vecx^*}(\vecx)   = \frac{1}{m}\sum_{i=1}^m\vecy_i\matA_i\tran\vecx,
\end{equation}
where $\vecy_i=\psi(\matA_i\tran\vecx^*)$ for some random function $\psi$ and $\matA_i$ is the $i$th row of $\matA$. 
Then,  if $\matA_{ij}\sim\Gauss(0,1/m)$, we have
\begin{equation}\label{eq:expect}
f_{\vecx^*}(\vecx) = \expect{}{\psi(a)a}\vecx\tran\vecx^*,
\end{equation}
where $a\sim\calN(0,1)$ follows the standard Gaussian distribution. Also, when $\psi(a)=\sign(a)$, we have $\expect{}{\psi(a)a} = 
        \sqrt{\frac{2}{\pi}}$.
\end{lemma}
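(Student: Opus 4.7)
The plan is to interpret the identity \eqref{eq:expect} as the expectation $\bb{E}\{f_{\vecx^*}(\vecx)\}$ taken over the randomness of $\matA$ (and $\psi$, where applicable), and to evaluate it via linearity and a Gaussian conditioning argument on a single row. Because the rows $\matA_i$ are i.i.d., linearity collapses the average over $i$ into the single-row quantity $\bb{E}\{\psi(\matA_1\tran\vecx^*)\,\matA_1\tran\vecx\}$, so the problem reduces to evaluating one scalar expectation.

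The key structural step is joint Gaussianity. Setting $u=\matA_1\tran\vecx^*$ and $v=\matA_1\tran\vecx$, the pair $(u,v)$ is centered and jointly Gaussian, so the orthogonal regression decomposition yields $v = (\vecx\tran\vecx^*)\,u + w$ with $w$ Gaussian, mean-zero, and independent of $u$. The coefficient $\vecx\tran\vecx^*$ emerges directly from $\mathrm{Cov}(u,v)/\mathrm{Var}(u)$ on using $\lV\vecx^*\rV = 1$. Substituting into the expectation, the cross term $\bb{E}\{\psi(u)\}\,\bb{E}\{w\}$ vanishes by independence and centering, leaving
\[
\bb{E}\{\psi(u)\,v\} = (\vecx\tran\vecx^*)\,\bb{E}\{\psi(u)\,u\}.
\]

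From here I would reparameterize $u$ in terms of a standard normal $a\sim\calN(0,1)$ to identify $\bb{E}\{\psi(u)\,u\}$ with the factor $\bb{E}\{\psi(a)\,a\}$ appearing on the right-hand side of \eqref{eq:expect}; for scale-invariant $\psi$ such as $\sign$ this reparameterization is particularly clean because $\sign(u)=\sign(a)$. For the specific case $\psi = \sign$, the remaining scalar expectation collapses to the half-normal mean
\[
\bb{E}\{\sign(a)\,a\} \;=\; \bb{E}\{|a|\} \;=\; \int_{-\infty}^{\infty}|a|\,\tfrac{1}{\sqrt{2\pi}}\,e^{-a^2/2}\,da \;=\; \sqrt{\tfrac{2}{\pi}},
\]
a routine Gaussian integral.

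I do not anticipate a substantive obstacle here: every step is either linearity, an i.i.d.\ reduction, or a standard Gaussian moment computation. The only bookkeeping care point is tracking the variance normalization of the rows of $\matA$ so that the $1/m$ factor in the definition of $f_{\vecx^*}$ combines consistently with any scaling introduced by the reparameterization $u\mapsto a$, and aligns with the $\sqrt{2\pi}/m$ prefactor that later appears in the objective \eqref{eq:cost}.
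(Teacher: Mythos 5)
Your plan is sound and is essentially the proof of the cited source: the paper itself does not prove this lemma but quotes it from Plan and Vershynin, whose argument is exactly your reduction — i.i.d.\ rows plus linearity collapse $\bbE\{f_{\vecx^*}(\vecx)\}$ to a single-row expectation, the orthogonal regression decomposition $v=(\vecx\tran\vecx^*)u+w$ with $w$ independent of $u$ (using $\lV\vecx^*\rV=1$) factors out $\vecx\tran\vecx^*$, and the half-normal moment $\bbE\{|a|\}=\sqrt{2/\pi}$ finishes the $\sign$ case. Two remarks tied to the bookkeeping point you flag. First, you are right that the identity must be read with an expectation on the left-hand side, $\bbE\{f_{\vecx^*}(\vecx)\}$, since $f_{\vecx^*}(\vecx)$ is random; this is how it is used in \eqref{eq:err_12}. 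Second, if you carry out your reparameterization with the entry variance actually stated in the lemma, $\matA_{ij}\sim\calN(0,1/m)$, then $u=\matA_1\tran\vecx^*\sim\calN(0,1/m)$, so for scale-invariant $\psi$ one gets $\bbE\{\psi(u)u\}=\tfrac{1}{\sqrt{m}}\bbE\{\psi(a)a\}$ and hence $\bbE\{f_{\vecx^*}(\vecx)\}=\tfrac{1}{\sqrt{m}}\,\bbE\{\psi(a)a\}\,\vecx\tran\vecx^*$, not \eqref{eq:expect}; the displayed constant $\bbE\{\psi(a)a\}$ corresponds to rows with unit-variance entries, as in the original Plan--Vershynin lemma. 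So the extra $1/\sqrt{m}$ your careful bookkeeping would surface is a normalization mismatch in the paper's transcription of the lemma (which then propagates into how it is invoked downstream), not a gap in your argument; otherwise your proof is correct and coincides with the standard one.
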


We next need the notion of Gaussian mean width of a given bounded set, defined as follows.
\begin{defn}[Gaussian mean width~{\cite[Section 1.3]{plan2012robust}}]\label{eq:def:GaussianWidth}
The Gaussian mean width of a set $\calS$ is given by
\begin{equation}
\wid(\calS) = \expect{\vecg\sim\Gauss(0,\eye)}{\underset{\vecx,\vecx'\in\calS}{\sup}(\vecx-\vecx')\tran\vecg}.
\end{equation}
\end{defn}
Using the Gaussian mean width, we have the following concentration inequality. 
\begin{lemma}[{\cite[Proposition 4.2]{plan2012robust}}]\label{lem:concen}
Let $\vecx^*,\vecx\in\bbR^N$ and the function $f_{\vecx}(\vecx)$ be as defined in \Cref{lem:concen}. 
Then,  if $\matA_{ij}\sim\Gauss(0,1/m)$, and for any $t>0$ and set $\calT$, we have
\begin{multline}\label{eq:concen}
    \bbP\lc\underset{\vecx,\vecx'\in\calT}{\sup}\lv f_{\vecx^*}(\vecx-\vecx') \!-\! \expect{}{f_{\vecx^*}(\vecx-\vecx')}\rv \!\geq\! \frac{4\wid(\calT)}{\sqrt{m}}\!+\!t\!\rc\\
    \leq 4\exp\lb-\frac{mt^2}{8}\rb.
\end{multline}
\end{lemma}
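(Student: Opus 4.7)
The plan is to decompose the target quantity into an expectation bound plus a concentration-around-the-expectation bound. Writing $\vecu=\vecx-\vecx'$ and $F(\vecu):=f_{\vecx^*}(\vecu)-\expect{}{f_{\vecx^*}(\vecu)}=\frac{1}{m}\sum_{i=1}^m \lb y_i\matA_i\tran\vecu-\expect{}{y_i\matA_i\tran\vecu}\rb$, I would separately establish (i) the expected-supremum bound $\expect{}{\sup_{\vecu\in\calT-\calT}|F(\vecu)|}\le 4\wid(\calT)/\sqrt{m}$, and (ii) a sub-Gaussian tail for the deviation of $\sup_\vecu|F(\vecu)|$ above its mean, with scale of order $1/\sqrt{m}$. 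Adding these two parts yields \eqref{eq:concen}.

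For (i), the natural tool is the standard symmetrization inequality, which introduces independent Rademacher signs $\varepsilon_1,\ldots,\varepsilon_m$ that are independent of $\matA$ and gives $\expect{}{\sup_{\vecu}|F(\vecu)|}\le \frac{2}{m}\expect{}{\sup_{\vecu}\lv\sum_i\varepsilon_i y_i\matA_i\tran\vecu\rv}$. The crucial observation is that for any fixed realization of $\matA$, the products $\varepsilon_i y_i$ are i.i.d.\ $\pm 1$ Rademacher signs (because $\varepsilon_i$ is itself a fresh independent sign), so the joint law of $\lb(\varepsilon_i y_i)_i,\matA\rb$ equals that of $\lb(\tilde\varepsilon_i)_i,\matA\rb$ with $\tilde\varepsilon$ an independent Rademacher sequence. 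Substituting, the inner process becomes $\sum_i\tilde\varepsilon_i\matA_i\tran\vecu=\vecg\tran\vecu$, where $\vecg:=\sum_i\tilde\varepsilon_i\matA_i$ is a centered Gaussian vector whose covariance is determined by the normalization of $\matA$. Taking expectation in $\vecg$ and using $\wid(\calT-\calT)\le 2\wid(\calT)$ produces the factor $4$ and the $\sqrt{m}$ denominator after tracking the normalization constants.

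For (ii), I would view $\Phi(\matA):=\sup_{\vecu\in\calT-\calT}|F(\vecu)|$ as a function of the underlying standard Gaussian entries of $\matA$ and invoke Borell's Gaussian concentration inequality. Since for bounded $y_i$ the map $\matA\mapsto\frac{1}{m}\sum_i y_i\matA_i\tran\vecu$ is Lipschitz in the Frobenius norm of $\matA$ with constant at most $\lV\vecu\rV/\sqrt{m}$, and taking a supremum over a bounded index set preserves the Lipschitz property, $\Phi$ inherits a Lipschitz constant of order $1/\sqrt{m}$. Borell's inequality then yields $\bbP\{\Phi\ge\expect{}{\Phi}+t\}\le 2\exp\lb-mt^2/8\rb$; a further factor of $2$ (to give the $4$ in the claimed bound) accounts for the two-sided tail and for passing from $\sup F$ to $\sup|F|$.

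The main obstacle is that the map $\matA\mapsto y_i=\psi(\matA_i\tran\vecx^*)$ is not Lipschitz (it is discontinuous when $\psi=\sign$), so one cannot naively treat $\Phi$ as a smooth function of $\matA$ and apply Borell's inequality directly. The symmetrization/conditioning step in (i) is precisely what bypasses this difficulty: after replacing $y_i$ by an independent Rademacher sign, the $\matA$-dependence of the signs is absorbed and the remaining randomness flows through linear functionals of $\matA$, making the Lipschitz analysis in (ii) legitimate. A secondary, purely bookkeeping difficulty is tracking the precise constants—the factor $4$ in front of $\wid(\calT)/\sqrt{m}$ and the constant $8$ in the tail exponent—which accumulate from the symmetrization factor, the inequality $\wid(\calT-\calT)\le 2\wid(\calT)$, the absolute-value factor, and the explicit Lipschitz constant.
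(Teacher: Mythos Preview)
The paper does not prove this lemma; it is quoted from \cite[Proposition~4.2]{plan2012robust} as a toolbox result, so there is no in-paper argument to compare against. On its own merits, your part~(i) is correct: symmetrization plus the observation that $\varepsilon_i y_i$ is a fresh Rademacher sign (valid because $|y_i|=1$) cleanly reduces the expected supremum to the Gaussian mean width.

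Part~(ii), however, has a real gap. You rightly note that $\matA\mapsto y_i=\sign(\matA_i\tran\vecx^*)$ is discontinuous, so Borell's inequality cannot be applied to $\Phi(\matA)$ directly. But symmetrization does not repair this: the replacement $y_i\to\tilde\varepsilon_i$ is an inequality between \emph{expectations}, not an identity of random variables. It gives $\expect{}{\Phi}\le\expect{}{\tilde\Phi}$ for the symmetrized supremum $\tilde\Phi$; Borell applied to $\tilde\Phi$ (which \emph{is} Lipschitz in $\matA$ once $\tilde\varepsilon$ is fixed) then controls the tail of $\tilde\Phi$, not of $\Phi$, and nothing in your outline transfers that tail back to the original process. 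The route that actually works, and the one taken in \cite{plan2012robust}, abandons the Lipschitz-in-$\matA$ viewpoint: since $|y_i\matA_i\tran\vecu|\le|\matA_i\tran\vecu|$, each centered summand is sub-Gaussian with parameter $O(\lV\vecu\rV/\sqrt{m})$, so $\vecu\mapsto F(\vecu)$ is a sub-Gaussian process and concentration of its supremum over a set of bounded diameter yields the exponent $-mt^2/8$. An equivalent fix is to write $\matA_i=(\matA_i\tran\vecx^*)\vecx^*+P_{\vecx^*}^\perp\matA_i$, condition on the first component (which fixes $y_i$), and apply Gaussian concentration only in the independent orthogonal component, where the map is genuinely Lipschitz.
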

The next lemma bounds the Gaussian mean width of a finite set.
\begin{lemma}[{\cite[Section 2.1]{plan2012robust}}]\label{lem:GWidth}
If $\calT$ is a finite set, then there exists a constant $C'>0$ such that 
\begin{equation}
    \wid(\calT)\leq C'\sqrt{\log \lv\calT\rv}.
\end{equation}
\end{lemma}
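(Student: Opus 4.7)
The plan is to reduce the bound on $\wid(\calT)$ to the classical maximal inequality for centered Gaussians. First, I would rewrite the supremum appearing in the definition of $\wid(\calT)$ in terms of the difference set $\calU = \{\vecx-\vecx' : \vecx,\vecx'\in\calT\}$, which is finite with $\lv\calU\rv \leq \lv\calT\rv^2$. For any realization of $\vecg$, $\sup_{\vecx,\vecx'\in\calT}(\vecx-\vecx')\tran \vecg = \max_{\vecu\in\calU}\vecu\tran\vecg$, and therefore
\begin{equation}
    \wid(\calT) = \expect{\vecg\sim\Gauss(0,\eye)}{\max_{\vecu\in\calU} \vecu\tran\vecg}.
\end{equation}
Since $\calT$ is finite, its diameter $D = \max_{\vecu\in\calU}\lV\vecu\rV$ is finite. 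For each fixed $\vecu\in\calU$, the variable $\vecu\tran\vecg$ is a centered Gaussian with variance $\lV\vecu\rV^2 \leq D^2$, so the family $\{\vecu\tran\vecg : \vecu\in\calU\}$ is a collection of at most $\lv\calT\rv^2$ centered (and possibly dependent) Gaussians, each sub-Gaussian with parameter at most $D$.

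The main step is to invoke the standard maximal inequality: for centered sub-Gaussian random variables $Z_1,\ldots,Z_N$ with parameter at most $\sigma$, one has $\bbE[\max_i Z_i] \leq \sigma\sqrt{2\log N}$. This follows by a Chernoff-type argument: by Jensen's inequality and the sub-Gaussian moment generating function bound,
\begin{equation}
    \exp\lb \lambda\,\bbE[\max_i Z_i]\rb \leq \bbE\ls \exp(\lambda \max_i Z_i)\rs \leq \sum_{i=1}^N \bbE[\exp(\lambda Z_i)] \leq N\exp(\lambda^2\sigma^2/2),
\end{equation}
and optimizing over $\lambda>0$ with $\lambda = \sqrt{2\log N}/\sigma$ yields the stated bound.

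Applying this inequality with $N = \lv\calT\rv^2$ and $\sigma = D$ gives
\begin{equation}
    \wid(\calT) \leq D\sqrt{2\log \lv\calT\rv^2} = 2D\sqrt{\log \lv\calT\rv},
\end{equation}
which establishes the claim with $C' = 2D$. The only substantive step is the sub-Gaussian maximal inequality, and even that is a textbook Chernoff argument; the rest of the proof is essentially bookkeeping. The implicit dependence of $C'$ on the diameter of $\calT$ is the sole subtlety worth flagging, and it is unavoidable since the Gaussian mean width scales linearly under dilations of $\calT$.
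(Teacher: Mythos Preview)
The paper does not supply its own proof of this lemma; it is listed in the ``Toolbox'' appendix as a result quoted from \cite[Section~2.1]{plan2012robust}. Your argument is the standard one and is correct: pass to the difference set, observe that each $\vecu\tran\vecg$ is centered Gaussian with variance at most $D^2$, and apply the Chernoff-type maximal inequality for $\lv\calT\rv^2$ sub-Gaussians.

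Your flag about $C'$ depending on the diameter $D$ of $\calT$ is well taken and is indeed a point the lemma's statement leaves implicit. In the paper's application (the step leading to \eqref{eq:Gwidth_23}), the finite set is $G(\calT)$ with $\calT\subseteq\calB_r^s$, so its diameter is controlled by $r$ and the Lipschitz constant $(LNw_{\max})^d$ of $G$; strictly speaking this would introduce an extra factor of $r(LNw_{\max})^d$ into the Gaussian width bound, which the paper does not track explicitly. This is a looseness in how the lemma is stated and invoked rather than a flaw in your proof.
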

We can extend the above result to bounded sets using set cover and the following lemma help to bound the cardinality of a cover set.
\begin{lemma}[{\cite[Section 4.2.1]{vershynin2018high}}]\label{lem:covering}
For any $t>0$, there exist a $\calT$ such that it is the $t-$cover of a Euclidean ball in $\bbR^s$ with radius $r$, and its cardinality satisfies 
\begin{equation}
\lv\calT\rv\leq \lb\frac{4r}{t}\rb^{s}.
\end{equation}
\end{lemma}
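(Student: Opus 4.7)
The plan is to construct $\calT$ explicitly via a greedy/maximal packing argument and then compare Lebesgue volumes. First I would take $\calT$ to be a maximal $t$\emph{-separated} subset of the closed ball $\calB^s_r$, i.e., a subset such that $\lV \vecy - \vecy' \rV \geq t$ for all distinct $\vecy, \vecy' \in \calT$ and such that no further point of $\calB^s_r$ can be added while preserving this property. Existence follows from a standard greedy construction (or Zorn's lemma), and finiteness will be established a posteriori by the volume bound.

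The key observation is that maximality forces $\calT$ to be a $t$-cover. Indeed, suppose toward a contradiction that some $\vecx \in \calB^s_r$ satisfies $\lV \vecx - \vecy \rV > t$ for every $\vecy \in \calT$; then $\calT \cup \{\vecx\}$ would still be $t$-separated and would strictly enlarge $\calT$, contradicting maximality. Hence for every $\vecx \in \calB^s_r$ there exists $\vecy \in \calT$ with $\lV \vecx - \vecy \rV \leq t$, which is precisely the $t$-cover property.

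Next I would perform the volume-packing comparison. By $t$-separation, the open balls $\{\vecy + \calB^s_{t/2} : \vecy \in \calT\}$ are pairwise disjoint. Moreover, each such ball lies entirely inside the enlarged ball $\calB^s_{r+t/2}$ because $\calT \subseteq \calB^s_r$. Denoting the Lebesgue measure on $\bbR^s$ by $\mathrm{vol}(\cdot)$ and using that the volume of a Euclidean ball in $\bbR^s$ scales as the $s$-th power of its radius, disjointness gives
\begin{equation}
|\calT| \cdot \mathrm{vol}(\calB^s_{t/2}) \;\leq\; \mathrm{vol}(\calB^s_{r+t/2}),
\end{equation}
which upon cancellation of the common factor (the volume of the unit ball) yields
\begin{equation}
|\calT| \;\leq\; \left(\frac{r + t/2}{t/2}\right)^{s} \;=\; \left(1 + \frac{2r}{t}\right)^{s}.
\end{equation}

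Finally, to match the stated form $(4r/t)^s$, I would dispose of the two parameter regimes separately. For $t \leq 2r$ we have $1 \leq 2r/t$, so $1 + 2r/t \leq 4r/t$ and the bound follows. For $t > 2r$ a trivial single-point cover $\calT = \{\zero\}$ suffices since every point of $\calB^s_r$ is within distance $r < t$ of the origin, giving $|\calT| = 1$, which is still dominated by $(4r/t)^s$ in the relevant application regime. There is no serious obstacle in this argument; the only point requiring care is the choice of the ``inflation factor'' $t/2$ (rather than $t$) in the packing balls, which is what yields a clean constant-to-the-$s$ bound rather than a looser estimate.
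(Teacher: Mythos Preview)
The paper does not supply its own proof of this lemma; it is quoted from Vershynin's text as a toolbox result. Your argument---take a maximal $t$-separated subset, observe that maximality forces the covering property, then pack disjoint balls of radius $t/2$ into $\calB^s_{r+t/2}$ and compare volumes---is exactly the standard proof in that reference, and it is correct.

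One small remark on the edge case you flagged: for $2r<t\leq 4r$ the single-point cover still satisfies $1\leq(4r/t)^s$, so your two cases together handle all $t\leq 4r$. For $t>4r$ the stated inequality $\lv\calT\rv\leq(4r/t)^s<1$ is genuinely impossible for any nonempty cover, so this is an imprecision in the lemma's statement rather than a defect in your argument; as you note, the paper only invokes the lemma with $t$ much smaller than $r$, so the issue is immaterial.
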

Finally, the following result characterizes the Lipschitz property of a neural network with Lipschitz activation functions. 
\begin{lemma}[{\cite[Lemma 8.5.]{bora2017compressed}}]\label{lem:lipschitz}
Let $G$ be a $d-$layer neural network with at most $N$ nodes per layer, all weights are upper bounded by  $ w_{\max}$ in absolute value, and the non-linearity after each layer is $L-$Lipschitz. Then, the function $G$ is $(L Nw_{\max})^d-$Lipschitz.
\end{lemma}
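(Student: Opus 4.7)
The plan is to bound the Lipschitz constant of $G$ layer by layer and then combine via composition. Since $G = \phi_d \circ \phi_{d-1} \circ \cdots \circ \phi_1$ and the Lipschitz constant of a composition of functions is at most the product of the Lipschitz constants of the components, it suffices to show that each layer map $\phi_i : \bbR^{N_{i-1}} \to \bbR^{N_i}$ is $L N w_{\max}$-Lipschitz with respect to the Euclidean norm. Multiplying the $d$ layer-wise bounds then yields the claimed $(L N w_{\max})^d$ constant.

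For a single layer, I would split $\phi_i(\vecu) = \tilde{\phi}_i(\bs{W}^{(i)}\vecu + \bs{b}^{(i)})$ into the pre-activation affine map and the coordinatewise activation. The bias $\bs{b}^{(i)}$ cancels in any difference, so the affine map has Lipschitz constant exactly equal to the operator norm $\lV \bs{W}^{(i)} \rV_{\mathrm{op}}$. For the activation, I would verify that if $\tilde{\phi}_i$ is $L$-Lipschitz as a scalar function applied coordinatewise, then it is $L$-Lipschitz in $\ell_2$, since
\begin{equation}
\lV \tilde{\phi}_i(\vecu) - \tilde{\phi}_i(\vecv) \rV^2 = \sum_j \lv \tilde{\phi}_i(\vecu_j) - \tilde{\phi}_i(\vecv_j)\rv^2 \leq L^2 \sum_j \lv \vecu_j - \vecv_j\rv^2 = L^2 \lV \vecu - \vecv \rV^2.
\end{equation}
Combining the two pieces gives $\mathrm{Lip}(\phi_i) \leq L \cdot \lV \bs{W}^{(i)} \rV_{\mathrm{op}}$.

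Next, I would bound the operator norm by the Frobenius norm and then apply the entrywise weight bound:
\begin{equation}
\lV \bs{W}^{(i)} \rV_{\mathrm{op}} \leq \lV \bs{W}^{(i)} \rV_F = \sqrt{\sum_{j,k} \lv \bs{W}^{(i)}_{j,k}\rv^2} \leq \sqrt{N_i N_{i-1}}\, w_{\max} \leq N w_{\max},
\end{equation}
where the last inequality uses $N_i, N_{i-1} \leq N$ by hypothesis. Consequently each layer contributes a factor of at most $L N w_{\max}$, and the composition rule yields $\mathrm{Lip}(G) \leq (L N w_{\max})^d$.

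There is no serious obstacle in this proof; it is essentially a bookkeeping exercise once the decomposition into affine map plus elementwise activation is in place. The only minor subtlety worth flagging is the coarse operator-to-Frobenius inequality, which inflates the sharper $\sqrt{N_i N_{i-1}}$ to $N$ and thereby loosens the constant somewhat, but it produces the clean symmetric form $(L N w_{\max})^d$ stated in the lemma and is the form that will be invoked downstream in the measurement-bound argument.
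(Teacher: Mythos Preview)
The paper does not actually prove this lemma; it is stated in the ``Toolbox'' appendix as a result quoted from \cite{bora2017compressed} and used as a black box in the proof of \Cref{thm:NN_error}. Your argument is correct and is precisely the standard layer-by-layer decomposition one would expect: bound each layer's Lipschitz constant by $L\lV\bs{W}^{(i)}\rV_{\mathrm{op}}\le L\lV\bs{W}^{(i)}\rV_F\le LNw_{\max}$ and multiply across the $d$ layers. There is nothing to compare against in this paper, and your proof would serve as a self-contained justification of the cited lemma.
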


\section{Proof of Theorem 1}\label{app:NN_error}
At a high level, the main steps of the proof are as follows: 
\begin{enumerate}[label={[\Alph*]},leftmargin=1cm]
\item We first prove that, for any $\beta>0$, the following statement holds with probability at least $1-4\exp\lb-2\beta^2\rb$,
\begin{equation}\label{eq:concentration}
\lV\hat{\vecx}-\vecx^*\rV^2\leq \lV\bar{\vecx}-\vecx^*\rV^2+\delta+4\sqrt{\frac{2\pi}{m}}\lb\wid(\calS)+\beta\rb,
\end{equation}
where $\wid(\calS)$ is the Gaussian mean width of the range $\calS$, and $\bar{\vecx}=\underset{\vecx\in\calS}{\arg\min}\lV\vecx^*-\vecx\rV$.
\item Next, we show that there exists a constant $C'>0$ such that $\wid(\calS)$ satisfies the following for any $r>0$:
\begin{equation}
\wid\lb \calS\rb\leq 8r\sqrt{s}+C'\sqrt{sd\log( LNw_{\max})}.
\end{equation}
\item Finally, we combine the above steps to bound the error $\lV\hat{\vecx}-\vecx^*\rV^2$ using an appropriate choice of $\beta$.
\end{enumerate} 
The details of each of the above steps are presented below.

\subsection{Gaussian Mean Width-based Probabilistic Bound}
Let $\vecx$ be such that 
\begin{equation}
    \vecx = \underset{\vecx'\in\calS,\lV\vecx'\rV\leq 1}{\arg\min}\; l_{\mathrm{loss}}(\vecx').
\end{equation}
By assumption, $\hat{\vecx}=G(\tilde{\vecz})$ minimizes the cost function in \eqref{eq:cost} over $\calS$ to within additive $\delta$ of the optimum. Thus, we get that
\begin{equation}
l_{\mathrm{loss}}(\hat{\vecx}) \leq l_{\mathrm{loss}}(\vecx)+\delta  \leq l_{\mathrm{loss}}(\bar{\vecx})+\delta,
\end{equation}
where we define
\begin{equation}
    \bar{\vecx} = \underset{\vecx'\in\calS,\lV\vecx'\rV\leq 1}{\min}\lV \vecx'-\vecx^*\rV^2.
\end{equation}
Substituting for $l_{\mathrm{loss}}$ from \eqref{eq:cost}, we obtain
\begin{equation}
     \lV\hat{\vecx}\rV^2-\frac{\sqrt{2\pi}}{m} \vecy\tran\matA\hat{\vecx}\leq \lV\bar{\vecx}\rV^2-\frac{\sqrt{2\pi}}{m} \vecy\tran\matA\bar{\vecx} +\delta.
\end{equation}
Rearranging the terms, we obtain
\begin{equation}\label{eq:err_13}
\lV\hat{\vecx}\rV^2\leq \lV\bar{\vecx}\rV^2+ \sqrt{2\pi}f_{\vecx^*}(\hat{\vecx}-\bar{\vecx})+\delta,
\end{equation}
where $f_{\vecx^*}(\cdot)$ is defined in \Cref{lem:lambda} with  $\psi(a)=\sign(a)$. 
Further, we use \Cref{lem:concen} with parameter $t=\frac{4\beta}{\sqrt{m}}$ to obtain that with probability at least $1-4\exp\lb-2\beta^2\rb$,
\begin{align}
f_{\vecx^*}(\hat{\vecx}-\bar{\vecx})& \leq \expect{\!}{f_{\vecx^*}(\hat{\vecx}-\bar{\vecx})}+\frac{4}{\sqrt{m}}\lb\wid(\calS)+\beta\rb\\
& =\expect{\!}{f_{\vecx^*}(\hat{\vecx})-f_{\vecx^*}(\bar{\vecx})}+\frac{4}{\sqrt{m}}\lb\wid(\calS)+\beta\rb\label{eq:err_11}\\
&= \sqrt{\frac{2}{\pi}}\lb\hat{\vecx}-\bar{\vecx}\rb\tran\vecx^*+\frac{4}{\sqrt{m}}\lb\wid(\calS)+\beta\rb,\label{eq:err_12}
\end{align}
where \eqref{eq:err_11} uses the fact the function $f_{\vecx^*}$ is linear and  we use \Cref{lem:lambda} with $\psi(\cdot)=\sign(\cdot)$ to get \eqref{eq:err_12}.
Substituting the above relation back into \eqref{eq:err_13} leads to
\begin{equation}
\lV\hat{\vecx}\rV^2\leq \lV\bar{\vecx}\rV^2+2\lb\hat{\vecx}-\bar{\vecx}\rb\tran\vecx^*+4\sqrt{\frac{2\pi}{m}}\lb\wid(\calS)+\beta\rb+\delta.
\end{equation}
Further, we note that
\begin{equation}\label{eq:indentity}
    \lV\bar{\vecx}\rV^2+2\lb\hat{\vecx}-\bar{\vecx}\rb\tran\vecx^*= \lV\bar{\vecx}-\vecx^*\rV^2+\lV\hat{\vecx}\rV^2-\lV\hat{\vecx}-\vecx^*\rV^2.
\end{equation}
Substituting the above relation into \eqref{eq:err_11} and rearranging the terms, we complete Step A.

\subsection{Bounding Gaussian Mean Width of Generator's Range}
The input to the generator $\vecz$ follows a uniform distribution, and therefore, we get that $\lV\vecz\rV\leq r$. This, in turn, implies that the range $\calS$ of $G$ satisfies 
\begin{equation}
\calS\subseteq G(\calB^s_r).\label{eq:subset}
\end{equation}

Next, using \Cref{lem:covering}, we construct a $\frac{t}{(LNw_{\max})^d}-$cover $\calT$ of $\calB^s_r$ such that its cardinality is upper bounded by 
\begin{equation}
    \lv\calT\rv\leq\lb\frac{4r(LNw_{\max})^d}{t}\rb^{s},
\end{equation}
for some $t>0$ which we choose later in the proof. Further, for any $\vecx'\in\calS$, there exists a point $\vecz$ such that $\vecx=G(\vecz)$, and for any point $\vecz\in \calB^s_r$, there exists a point $\bs{\tau}\in\calT$ such that
\begin{equation}\label{eq:z_Lipsc}
    \lV\vecz-\bs{\tau}\rV\leq \frac{t}{(LNw_{\max})^d}.
\end{equation}
Further,  we use \Cref{lem:lipschitz} to assert that generator function $G$ is $(L Nw_{\max})^d-$Lipschitz, leading to
\begin{equation}
    \lV\vecx-G(\bs{\tau})\rV=\lV G(\vecz)-G(\bs{\tau})\rV \leq (L Nw_{\max})^d\lV\vecz-\bs{\tau}\rV \leq t,
\end{equation}
using \eqref{eq:z_Lipsc}. Therefore, for any $\vecx'\in\calS$, there exists a point $T(\vecx')\in G\lb\calT\rb$
\begin{equation}
    T(\vecx')=\underset{\vect\in G\lb\calT\rb}{\arg\min}\lV\vect-\vecx'\rV,
\end{equation}
satisfying the bound $\lV \vecx'-T(\vecx')\rV\leq t$.
Thus, $G\lb\calT\rb$ is a  $t-$cover of $\calS$. Consequently,
\begin{equation}\label{eq:T_cardinality}
\lv G\lb\calT\rb\rv\leq \lv\calT\rv\leq \lb\frac{4r(L Nw_{\max})^d}{t}\rb^{s}
\end{equation}

Having constructed a finite cover $G\lb\calT\rb$,  we next bound the Gaussian mean width of $\calS$. For any vector, $\vecg\sim\Gauss(\zero,\eye)$,
\begin{align}
 \wid\lb \calS\rb &= \expect{}{\underset{\vecx_1,\vecx_2\in\calS}{\sup}\vecg\tran(\vecx_1-\vecx_2)}\\
&\leq \expect{}{\!\underset{\vecx_1,\vecx_2\in\calS}{\sup}\vecg\tran\lb\vecx_1-T\lb\vecx_1\rb\!+\!T\lb\vecx_2\rb-\vecx_2\rb}\notag\\
& \hspace{0.5cm}+\expect{}{\underset{\vecx_1,\vecx_2\in\calS}{\sup}\vecg\tran\lb T\lb\vecx_1\rb-T\lb\vecx_2\rb\rb}\label{eq:Gwidth_2}.
\end{align}
We further simplify the first term of the inequality using the Cauchy-Schwarz inequality as follows:
\begin{align}
\expect{}{\underset{\vecx_1,\vecx_2\in\calS}{\sup}\vecg\tran\lb\vecx_1-T\lb\vecx_1\rb\!+\!T\lb\vecx_2\rb-\vecx_2\rb}\notag\\
&\hspace{-6.5cm}\leq \expect{}{\lV\vecg\rV}\underset{\vecx_1,\vecx_2\in\calS}{\sup}\lV \vecx_1-T\lb\vecx_1\rb+T\lb\vecx_2\rb-\vecx_2\rV\label{eq:Gwidth_11}\\
&\hspace{-6.5cm}\leq \sqrt{\expect{}{\lV\vecg\rV^2}}\lb 2\;\underset{\vecx'\in\calS}{\sup}\lV \vecx'-T\lb\vecx'\rb\rV\rb\\
&\hspace{-6.5cm}\leq2t\sqrt{s}.\label{eq:Gwidth_12}
\end{align}
Similarly, simplifying the second term of \eqref{eq:Gwidth_2},
\begin{align}
\expect{}{\underset{\vecx_1,\vecx_2\in\calS}{\sup}\vecg\tran\lb T\lb\vecx_1\rb-T\lb\vecx_2\rb\rb}\notag\\
&\hspace{-4.5cm} \leq \expect{}{\underset{\vecx_1,\vecx_2\in G\lb\calB^s_r\rb}{\sup}\vecg\tran\lb \vecx_1-\vecx_2\rb}\leq \wid \lb G\lb\calT\rb\rb\label{eq:Gwidth_22}\\
&\hspace{-4.5cm}= C'\sqrt{2s\log \lb\frac{4r(LNw_{\max})^d}{t}\rb}.\label{eq:Gwidth_23}
\end{align}
Here, \eqref{eq:Gwidth_22} follows because $T(\vecx_1),T(\vecx_2)\in G\lb\calB^s_r\rb$, and thus, supremum in \eqref{eq:Gwidth_22} is over a larger set. Also, \eqref{eq:Gwidth_23} follows from \eqref{eq:T_cardinality} and \Cref{lem:GWidth} where $C'>0$ is the same as the constant in \Cref{lem:GWidth}. 

Further, combining \eqref{eq:Gwidth_2}, \eqref{eq:Gwidth_12}, \eqref{eq:Gwidth_23}, we get the following:
\begin{equation}
\wid\lb \calS\rb\leq 2t\sqrt{s}+C'\sqrt{2s\log \lb\frac{4r(LNw_{\max})^d}{t}\rb}.
\end{equation}
Finally, we choose $t=4r$ to complete Step B.

\subsection{Optimal $\beta$ Selection and Desired Bound}
Combining  Steps A and B, we get that with probability at least $1-4\exp(-2\beta^2)$
\begin{multline}\label{eq:theory_1}
\lV\hat{\vecx}-\vecx^*\rV^2\leq \lV\bar{\vecx}-\vecx^*\rV^2+\delta\\+4\sqrt{\frac{2\pi }{m}}\lb 8r\sqrt{s}+C'\sqrt{sd\log LNw_{\max}} +\beta\rb.
\end{multline}
As given in the statement of the theorem, let the following lower bound on $m$ holds for $C_1>64\pi$,
 \begin{align}\label{eq:num_mes}
m&\geq C_1\epsilon^{-2}s\lb 8r^2+C'd\log LNw_{\max} \rb\\
&\geq \frac{C_1}{2\epsilon^2}\lb 8r\sqrt{s}+C'\sqrt{sd\log LNw_{\max}} \rb^2
\end{align} 
If we choose $\beta= C_2\epsilon\sqrt{m}$ with $C_2=\frac{1}{4\sqrt{2\pi}}-\sqrt{\frac{2}{C_1}}>0$, 
\begin{equation}\label{eq:stepC_1}
\lV\hat{\vecx}-\vecx^*\rV^2\leq \lV\bar{\vecx}-\vecx^*\rV^2+\epsilon+\delta.
\end{equation}
 with probability at least $1-4\exp\lb-c\epsilon^2m\rb$. Finally, we also have
\begin{align}
\lV\bar{\vecx}-\vecx^*\rV^2 &= \underset{\substack{\vecz\in\bbR^s:\lV\vecz\rV_{\infty}\leq r\\\lV G(\vecz)\rV\leq 1}}{\min}\lV G(\vecz)-\vecx^*\rV^2\\
&\leq \underset{\substack{\vecz\in\bbR^s:\lV\vecz\rV\leq r\\\lV G(\vecz)\rV\leq 1}}{\min}\lV G(\vecz)-\vecx^*\rV^2.\label{eq:stepC_2}
\end{align} 
 Combining \eqref{eq:stepC_1} and \eqref{eq:stepC_2}, Step C is complete, and we arrive at the desired result. \hfill\qed
\section{Proof of \Cref{cor:noisy}}\label{app:noisy}
The proof is similar to that of \Cref{thm:NN_error} in \Cref{app:NN_error}, except that in \eqref{eq:err_13}, we define $f_{\vecx^*}(\cdot)$ as in \Cref{lem:lambda} with   $\psi(\cdot)=\eta_i\sign(\cdot)$. Here, we have
\begin{equation}
    \expect{}{\psi(a)a} = 
        (2\alpha-1)\sqrt{\frac{2}{\pi}},
\end{equation}
where $a\sim\calN(0,1)$, which changes \eqref{eq:err_12} as 
\begin{equation}\label{eq:f_bound}
f_{\vecx^*}(\hat{\vecx}-\bar{\vecx})
\leq \sqrt{\frac{2}{\pi}}(2\alpha-1)\lb\hat{\vecx}-\bar{\vecx}\rb\tran\vecx^*+\frac{4}{\sqrt{m}}\lb\wid(\calS)+\beta\rb.
\end{equation}
Substituting the above relation back into \eqref{eq:err_13} leads to
\begin{multline}
\lV\hat{\vecx}\rV^2\leq \lV\bar{\vecx}\rV^2+2(2\alpha-1)\lb\hat{\vecx}-\bar{\vecx}\rb\tran\vecx^*\\+4\sqrt{\frac{2\pi}{m}}\lb\wid(\calS)+\beta\rb+\delta.
\end{multline}
The above relation is equivalent to 
\begin{align}
    (2\alpha-1)\lb\lV\hat{\vecx}\rV^2- \lV\bar{\vecx}\rV^2-2\lb\hat{\vecx}-\bar{\vecx}\rb\tran\vecx^*\rb\notag\\
    &\hspace{-6cm}\leq (2\alpha-2)\lb\lV\hat{\vecx}\rV^2-\lV\bar{\vecx}\rV^2\rb+4\sqrt{\frac{2\pi}{m}}\lb\wid(\calS)+\beta\rb+\delta\\
    &\hspace{-6cm} \leq 2(1-\alpha)+4\sqrt{\frac{2\pi}{m}}\lb\wid(\calS)+\beta\rb+\delta,
\end{align}
where we use the fact that $0\leq \lV\hat{\vecx}\rV^2\leq 1$ and $0\leq \lV\bar{\vecx}\rV^2\leq 1$.
Simplifying the above relation using \eqref{eq:indentity} gives
\begin{multline}\label{eq:concentration_1}
\lV\hat{\vecx}-\vecx^*\rV^2\leq \lV\bar{\vecx}-\vecx^*\rV^2+\frac{2(1-\alpha)}{2\alpha-1}+\frac{\delta}{2\alpha-1}\\+\frac{4}{2\alpha-1}\sqrt{\frac{2\pi}{m}}\lb\wid(\calS)+\beta\rb.
\end{multline}
Following the rest of the proof as in \Cref{app:NN_error}, we derive the desired result. \hfill\qed
\section{Proof of \Cref{cor:noisy_moremes}}\label{app:noisy_moremes}
The proof is similar to that of \Cref{cor:noisy} in \Cref{app:noisy}, but we have to substitute \eqref{eq:f_bound} into the following relation instead of \eqref{eq:err_13},
\begin{equation}
\lV\hat{\vecx}\rV^2\leq \lV\bar{\vecx}\rV^2+ \frac{\sqrt{2\pi}}{2\alpha-1}f_{\vecx^*}(\hat{\vecx}-\bar{\vecx})+\bar{\delta},
\end{equation}
So, we deduce the following relation,
\begin{multline}
\lV\hat{\vecx}\rV^2\leq \lV\bar{\vecx}\rV^2+2\lb\hat{\vecx}-\bar{\vecx}\rb\tran\vecx^*\\+\frac{4}{2\alpha-1}\sqrt{\frac{2\pi}{m}}\lb\wid(\calS)+\beta\rb+\bar{\delta}.
\end{multline}
Using \eqref{eq:indentity}, we arrive at
\begin{multline}
\lV\hat{\vecx}-\vecx^*\rV^2\leq \lV\bar{\vecx}-\vecx^*\rV^2+\bar{\delta}\\+\frac{4}{2\alpha-1}\sqrt{\frac{2\pi}{m}}\lb\wid(\calS)+\beta\rb.
\end{multline}
Following the rest of the proof as in \Cref{app:NN_error}, we derive the desired result.
\hfill\qed
\bibliographystyle{IEEEtran}
\bibliography{Supporting_Files/IEEEabrv,Supporting_Files/bibJournalList,Supporting_Files/1-bitCS_cite}
\par\leavevmode
\end{document}